\def\eqref#1{equation~\ref{#1}}
\def\1{\bm{1}}
\DeclareMathAlphabet{\mathsfit}{\encodingdefault}{\sfdefault}{m}{sl}
\SetMathAlphabet{\mathsfit}{bold}{\encodingdefault}{\sfdefault}{bx}{n}
\DeclareMathOperator*{\argmin}{arg\,min}
\newtheorem{theorem}{Theorem}[section]
\newtheorem{lemma}[theorem]{Lemma}
\DeclareMathOperator{\e}{e}
\title{
On the Fairness ROAD: Robust Optimization for Adversarial Debiasing} 
\author{Vincent Grari\thanks{Equal contribution} $^{\,,1, 2, 4}$, Thibault Laugel\footnotemark[1] $^{\,,1, 2, 4}$,
Tatsunori Hashimoto$^2$,
Sylvain Lamprier$^3$,
Marcin Detyniecki$^{1, 4, 5}$ 
\\
$^1$ \quad AXA Group Operations\\ 
$^2$ \quad Stanford University\\
$^3$ \quad LERIA, Universit\'e d’Angers, France\\
$^4$ \quad TRAIL, Sorbonne Universit\'e, Paris, France\\
$^5$ \quad Polish Academy of Science, IBS PAN, Warsaw, Poland\\
\texttt{\{grari,laugel\}@stanford.edu} \\
\texttt{code: \url{https://github.com/fairmlresearch/ROAD}} \\
}
\newlength\myindent
\begin{document}

\maketitle

\begin{abstract}
In the field of algorithmic fairness, 
significant attention has been put on group fairness criteria, such as Demographic Parity and Equalized Odds. 
Nevertheless, these 
objectives, measured as global averages, 
have raised concerns about persistent local disparities between sensitive groups. 
In this work, we address the problem of local fairness, which ensures that the predictor is unbiased not only in terms of expectations over the whole population, but also within any subregion of the feature space, unknown at training time. 
To enforce this objective, we introduce ROAD, a novel approach that 
leverages
the Distributionally Robust Optimization (DRO) framework 
within a fair adversarial learning objective, where an adversary tries to predict the sensitive attribute from the predictions.
Using an instance-level re-weighting strategy, ROAD is designed to prioritize inputs that are likely to be locally unfair, i.e. where the adversary faces the least difficulty in reconstructing the sensitive attribute. 
Numerical experiments demonstrate the effectiveness of our method: it 
achieves,  for a given global fairness level, Pareto dominance with respect to 
local fairness and accuracy 
across three standard datasets, as well as enhances fairness generalization under distribution shift. 

\end{abstract}

\section{Introduction}

The increasing adoption of machine learning models in various applications, such as healthcare, finance, and criminal justice, has raised concerns about the fairness of algorithmic decision-making processes. As these models are often trained on historical data, they have been shown to unintentionally perpetuate existing biases and discrimination against certain vulnerable groups. Addressing fairness in machine learning has thus become an essential aspect of developing ethical and equitable systems, with the overarching goal of ensuring that prediction models are not influenced by sensitive attributes.
One of its most common concepts, group fairness, entails dividing the population into demographic-sensitive groups (e.g., male and female) and ensuring that the outcomes of a decision model are equitable across these different groups, as measured with criteria such as Demographic Parity (DP) ~\citep{dwork2012} and Equal Opportunity (EO) ~\citep{hardt2016}.


However, focusing solely on these group fairness criteria, along with predictive performance, has been increasingly questioned as an objective: besides being shown to poorly generalize to unseen, e.g. drifted, 
environments~\citep{kamp2021robustness}, it has been more generally criticized for being too simplistic~\citep{selbst2019fairness,binns2020apparent}, leading to arbitrariness in the bias mitigation process~\citep{krco2023mitigating} and the risk of having some people pay for others
~\citep{mittelstadt2023unfairness}. 
Recognizing these issues, some researchers have long focused on exploring more localized fairness behaviors, proposing to measure bias sectionally within predefined demographic categories, in which comparison between sensitive groups is deemed meaningful for the considered task. 
%
For instance, using \emph{Conditional Demographic Disparity}~\citep{vzliobaite2011handling}, fairness in predicted salaries between men and women 
shall be evaluated by comparing individuals within the same job category and seniority level, rather than making a global comparison across sensitive groups.

 Nevertheless, predefining these \emph{comparable} groups to optimize their local fairness is often difficult: for instance, which jobs should be deemed legally comparable with one another?~\citep{wachter2021why} 
In this paper, we therefore propose to address the difficult problem of enforcing fairness in local subgroups that are unknown at training time (Sec.~\ref{sec:background}). For this purpose, we leverage the Distributionally Robust Optimization (DRO) framework, initially proposed to address worst-case subgroup accuracy (see e.g.~\cite{duchi2018learning}). 
Our approach ROAD (\emph{Robust Optimization for Adversarial Debiasing}, described in Sec.\ref{sec:proposition-road}) combines DRO with a fair adversarial learning framework, which aims to minimize the ability of an adversarial model to reconstruct the sensitive attribute.
By boosting attention on feature regions where predictions are the most unfair in the sense of this sensitive reconstruction, ROAD is able to 
find the best compromise between local fairness, accuracy and global fairness.
Such dynamic focus is done by relying on a weighting process that respects some locality smoothness in the input space, in order to mitigate bias in any implicit subgroup of the population without  supervision. Experiments, described in Section~\ref{sec:experiments}, show the efficacy of the approach on various datasets. 

\section{Problem Statement}
\label{sec:background}

Throughout this document, we address a conventional supervised classification problem, trained using $n$ examples ${(x_{i},y_{i},s_{i})}_{i=1}^{n}$, 
where each example is composed of a feature vector $x_i \in \mathbb{R}^{d}$, containing $d$ predictors, a binary sensitive attribute $s_i$, and a binary label $y_i$. These examples are sampled from a training distribution $\Gamma=(X, Y, S) \sim p$. 
Our goal is to construct a predictive model~$f$ with parameters $w_f$ 
that minimizes the loss function $\mathcal{L_Y}(f(x), y)$ (e.g. log loss for binary classification), whilst adhering to fairness constraints based on specific fairness definitions relying on the sensitive attribute $S$. In this section, we present the fairness notions and works that are necessary to ground our proposition.



\subsection{Group Fairness}
\label{sec:background-groupfairness}

One key aspect of algorithmic fairness is group fairness, which aims to ensure that the outcomes of a decision model are equitable across different demographic groups. In this paper, we focus on two of the most well-known group fairness criteria: \emph{Demographic Parity} and \emph{Equalized Odds}.



\emph{Demographic Parity}: Demographic parity (DP)~\citep{dwork2012} is achieved when the proportion of positive outcomes is equal across all demographic groups. 
Using the notations above, the learning problem of a model $f$ under demographic parity constraints can be expressed as follows:
\begin{equation} \begin{aligned} \argmin_{w_f} \quad & \mathbb{E}_{(x,y,s) \sim p}{\;\mathcal{L_Y}(f_{w_f}(x), y)} \\ \textrm{s.t.} \quad & |\mathbb{E}_{(x,y,s) \sim p}{(\hat{f}_{w_f}(x) | s=1)} - \mathbb{E}_{(x,y,s) \sim p}{(\hat{f}_{w_f}(x) | s=0)}| < \epsilon \end{aligned} \label{eq:demographic-parity}
\end{equation}
Where $\hat{f}$ represents the output prediction after threshold (e.g., $\hat{f}_{w_f}(x)=\mathds{1}_{f_{w_f}(x)>0.5}$). 
The parameter $\epsilon$ represents the deviation permitted from perfect statistical parity, allowing for flexibility in balancing accuracy and fairness. 
In the following, this deviation is noted as  \emph{Disparate Impact} (DI),  representing the absolute difference in positive outcomes between the two demographic groups.



Although numerous methods exist to solve the problem described in Equation~\ref{eq:demographic-parity}, we focus in this work on the family of fair adversarial learning, which has been shown to be the most powerful framework for settings where acting on the training process is an option (i.e., in-processing method) ~\citep{zhang2018,adel2019one,grari2022adversarial}.
One of the most well-known fair adversarial approaches by~\citet{zhang2018} 
is framed as follows: 
\begin{equation}
\begin{aligned}
    \min_{w_f} \quad & {\mathbb{E}_{(x,y,s) \sim p}{\;\mathcal{L_Y}(f_{w_f}(x), y)}} \\
    \textrm{s.t.} \quad & \min_{w_g} \mathbb{E}_{(x,y,s) \sim p}{\mathcal{L_S}(g_{w_g}(f_{w_f}(x)), s)} > \epsilon'
\end{aligned}
    \label{eq:global-fairness-adv}
\end{equation}
Where $\mathcal{L}_\mathcal{S}$  represents a  loss for sensitive reconstruction (e.g. 
a log loss for a binary sensitive attribute). In this adversarial formulation, the goal is to learn a model~$f$ that minimizes the traditional loss of the predictor model, while simultaneously ensuring that an adversary~$g$ with parameters~$w_g$ cannot effectively distinguish between the two sensitive demographic groups based on the predictor's output~$f_{w_f}(x)$. 
The fairness constraint is thus imposed here as the adversary's ability to reconstruct the sensitive attribute, which should be limited, i.e. the value of the loss function $\mathcal{L_S}(g_{w_g}(f_{w_f}(x)), s)$ should be above a minimum value $\epsilon'$. 
In practice, to achieve a balance between the predictor's and the adversary's performance, a relaxed formulation of Equation~\ref{eq:global-fairness-adv} is used:
$\min_{w_f}\max_{w_g} \quad {\mathbb{E}_{(x,y,s) \sim p}{\mathcal{L_Y}(f_{w_f}(x), y)}} - \lambda \mathbb{E}_{(x,y,s) \sim p}{\mathcal{L_S}(g_{w_g}(f_{w_f}(x)), s)}$.
The coefficient $\lambda\in \mathbb{R}^+$ controls the trade-off between the predictor's performance on the task of predicting~$Y$ and the adversary's performance on reconstructing the sensitive attribute. A larger value of $\lambda$ emphasizes the importance of restricting the adversary's ability to reconstruct the sensitive attribute, while a smaller value prioritizes the performance of the predictor on the main task.


\emph{Equalized Odds}: Equalized Odds (EO)~\citep{hardt2016} is another group fairness criterion that requires the classifier to have equal true positive rates (TPR) and false positive rates (FPR) across demographic groups. 
This criterion is especially relevant when misclassification can have significant impacts on individuals from different groups. 
To achieve EO, \cite{zhang2018} employs an adversarial learning approach by concatenating  
the true outcome $Y$ to the input of the adversary. 

    \begin{figure}
        \centering
        \includegraphics[width=0.40\linewidth]{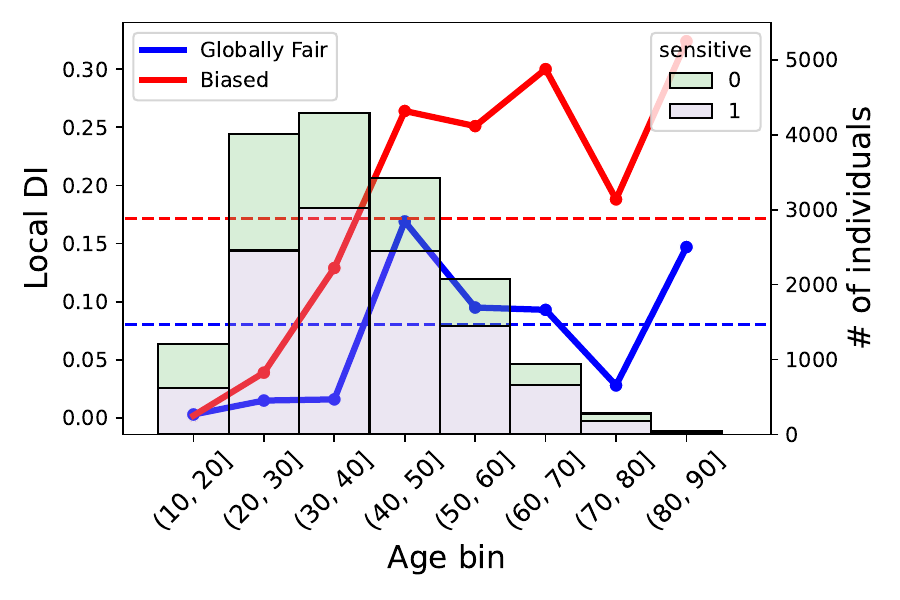}
        \includegraphics[width=0.40\linewidth]{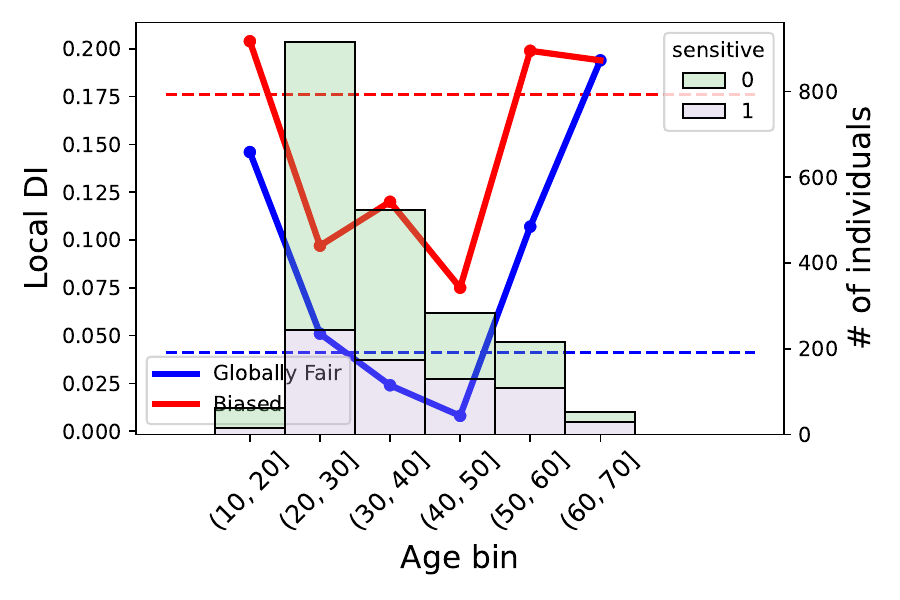}
        \caption{Visualizing local fairness: DI values measured globally (dashed lines) and locally (full lines) for a biased model (red) and one trained with a DI objective (in blue) (\cite{zhang2018} with  
$\lambda$=3). The bar plots show the sensitive attribute distribution in each age group. Left: Adult dataset; Right: Compas.}
        \label{fig:local-fairness-pb}
    \end{figure}


\subsection{The Local Fairness problem}
The global aspect of these group fairness criteria begs the question of the emergence of local undesired behaviors: by enforcing constraints on global averages between sensitive groups, we still expect that some local differences may persist~\citep{krco2023mitigating}.
We illustrate this phenomenon through a simple experiment, shown in Fig.~\ref{fig:local-fairness-pb}. On two datasets, Adult and Compas (described in App.~\ref{sec:app-datasets}), two models are trained: an unconstrained model solely optimizing for accuracy (called \emph{Biased}, in red), and the adversarial model from~\cite{zhang2018} (in blue) optimizing for Demographic Parity for the sensitive attributes \emph{gender} (Adult) and \emph{race} (Compas).
For each model, two types of Disparate Impact (DI) values are shown: the global DI values, calculated over all the test set (dashed lines); and the local ones, calculated in subgroups of the population (full lines). The subgroups are defined here as \emph{age categories}: discretized bins of the continuous attribute \emph{age}. Although local DI values are generally lower for the fair model, they vary a lot across subgroups, sometimes remaining unexpectedly high. This is especially true for less populated segments (e.g., higher \emph{age} values), and segments where the sensitive attribute distribution is extremely unbalanced: as the fairness constraint only concerns global averages, more attention is put on densely populated regions. On the other hand, less populated segments are more likely to be ignored during the training.

These local differences echo the long-asserted claim that the blunt application of group fairness metrics bears inherent inequalities through their failure to account for any additional context~\citep{selbst2019fairness,binns2020apparent}. Here, although reductive, the additional context we refer to is the information already available in the dataset~$X$, in which \emph{comparable} subgroups~\citep{wachter2021why} can be drawn to evaluate fairness.  
This helps defining the notion of \emph{Local Fairness} that is the focus of this paper: a locally fair model thus guarantees minimal differences in expectations within these comparable subgroups of~$X$. 
Contrary to works on intersectional fairness~\citep{kearns18}, the desired behavior in Fig.~\ref{fig:local-fairness-pb} is thus not to treat \emph{age} as a sensitive attribute: predictions $\hat{f}(x)$ are expected to vary along \emph{age}. However, in the Compas dataset for instance, equality between \emph{race} groups is expected to hold regardless of the \emph{age} category considered.
It is important to note that the notion studied here is also different from the one of \emph{individual fairness}, which aims to treat similarly individuals who are close w.r.t. some predefined similarity measure (see, e.g.
~\cite{dwork2012}), without any notion of sensitive data,  rather than minimize DI among subgroups of individuals.

Having knowledge of these subgroups 
at training time would mean that it could be included as an additional constraint in the learning objective, akin to the work of~\cite{vzliobaite2011handling}. The criterion they propose, Conditional Demographic Disparity, measures Demographic Disparity across user-defined subcategories.
However, several issues make this difficult, if not impossible, in practice. Besides that such expert knowledge is generally unavailable, or costly to acquire, the subgroups definitions might even be inconsistent across different testing environments (e.g. conflicting legal definitions of job categories or gender~\citep{wachter2021why}), making its optimization futile.
Furthermore,  exploring multiple categories is problematic in a combinatorial perspective.
In contrast, \cite{hashimoto2018fairness} considers the case of unknown subgroups via the Distributionally Robust Optimization framework, but do not attempt to enforce fairness constraints. 
In this paper, we propose to optimize accuracy while adhering to a worst-case fairness constraint, an objective that was originally introduced to enhance fairness generalization capabilities in scenarios involving distribution drift or noisy labels (cf. Sec.~\ref{sec:background-fairdro}). 
We define the subpopulations of interest, for which we aim to optimize fairness, as distributions $q$ within an \emph{uncertainty set} $\mathcal{Q}$,
and present the DRO framework for the Demographic Parity criterion as follows: 
\begin{equation} \begin{aligned} \min_{w_f} \quad & \mathbb{E}_{(x,y,s) \sim p}{\;\mathcal{L}_{\mathcal{Y}}(f_{w_f}(x), y)} \\
 \textrm{s.t.} \quad & \max_{q \in \mathcal{Q}} {\left|\mathbb{E}_{(x,y,s) \sim q}{\left[\hat{f}_{w_f}(x) | s=1\right]} - \mathbb{E}_{(x,y,s) \sim q}{\left[\hat{f}_{w_f}(x) | s=0\right]}\right|} < \epsilon \end{aligned} \label{eq:local-fairness} \end{equation}
The constraint ensures that the Disparate Impact remains less than a predefined threshold $\epsilon$ under the worst-case distribution
 $q \in \mathcal{Q}$. 
Working with distribution $q$ allows us to enforce local fairness by targeting subpopulations of interest, thus creating a more focused and adaptable model that  addresses fairness problems both globally and at a granular level. 


\subsection{Related Works}
\label{sec:background-fairdro}

Several works have proposed to address the objective in Eq~\ref{eq:local-fairness}, either to ensure better fairness generalization capabilities in drift scenarios~\citep{rezaei2021robust,ferry2022improving,wang2023robust} or when facing noisy labels~\citep{mandal2020ensuring,wang2020robust,roh2021sample}. 
The uncertainty set~$\mathcal{Q}$ then represents the perturbations that might affect the data at test time, and can therefore take several forms. For instance, \cite{mandal2020ensuring} does not impose any constraint on the distribution shape of $q \in \mathcal{Q}$, only setting maximal perturbations on probabilities of training samples. In this case, the objective is to guarantee fairness in the worst-possible
perturbations (not necessarily representing a specific drift). 
On the other hand, the uncertainty set $\mathcal{Q}$ is commonly defined as a ball centered on $p$ using distribution distances or similarities. Examples include 
Total Variation distance~\citep{wang2020robust}, Wasserstein distance~\citep{wang2021wasserstein} or Jaccard index~\citep{ferry2022improving}.

Nonetheless, due to the discrete nature of the problem expressed in Eq~\ref{eq:local-fairness} (the constraint is applied on $\hat{f}$ which is binary),  most existing works restrict to linear models~\citep{wang2020robust,rezaei2020fairness,mandal2020ensuring,taskesen2020distributionally}, or rule-based systems~\citep{ferry2022improving}. This allows them to look for analytical solutions using linear programming. Although~\cite{rezaei2021robust} is an exception in this regard, they suffer from several drawbacks, namely requiring knowledge about the target distribution at train time and about the sensitive attribute at test time.
Solving Equation~\ref{eq:local-fairness} using a wider class of models remains therefore, to the best of our knowledge, unexplored.

\section{ROAD: Robust Optimization for Adversarial Debiasing} 
\label{sec:proposition-road}
\subsection{Formalization}

To overcome the limitations of previous works, we introduce our proposition 
to address the fairness generalization problem
by combining adversarial optimization and the DRO framework. 
In order to learn a predictor ${f}_{w_f}$ 
that is fair both globally and for any subregion of the feature space, the idea is therefore to boost, at each optimization step, the importance of regions $q$ for which the sensitive reconstruction is the easiest for an optimal adversary $g_{w_g*}$ given the current prediction outcomes.
Rewriting the fairness constraint of Equation~\ref{eq:local-fairness} with an adversary~$g_{w_g}:\mathcal{Y}\rightarrow \mathcal{S}$, we thus focus on the following problem 
for Demographic Parity\footnote{Adapting our work to EO is straightforward: as described in Sec.~\ref{sec:background-groupfairness}, adapting the adversarial method of~\cite{zhang2018} to the EO task simply  
requires to concatenate the 
 the true outcome $Y$ to the prediction $f(x)$ as input of the adversarial classifier. The same process can be followed for ROAD.
}:
\begin{equation}
\begin{aligned}
\min_{w_f} \quad & {\mathbb{E}_{(x,y,s) \sim p}{\;\mathcal{L_Y}(f_{w_f}(x), y)}} \\
\textrm{s.t.} \quad & \min_{q \in \mathcal{Q}}{\mathbb{E}_{(x,y,s) \sim q}{\mathcal{L_S}(g_{{w_g}^{*}}(f_{w_f}(x), s))}} > \epsilon'
\\
\text{with }  &{w_g}^{*} = \arg\min_{w_g} \mathbb{E}_{(x,y,s) \sim p}{\mathcal{L_S}(g_{w_g}(f_{w_f}(x), s))}
\end{aligned}
\label{eq:fair-dro0}\end{equation}
A major challenge with this formulation is that exploring all possible distributions~$\mathcal{Q}$ is infeasible in the general sense. 
Worse, modeling distribution~$q$ directly over the whole feature space as support is very difficult, and usually highly inefficient, even for~$\mathcal{Q}$ restricted to distributions close to~$p$. This motivates an adversarial alternative, which relies on  importance weighting of training samples from~$p$.  
We therefore restrict~$\mathcal{Q}$ to the set of distributions that are absolutely continuous with respect to $p$\footnote{In the situation where all distributions in $Q$ are absolutely continuous with respect to $p$ all measurable subset $A \subset X \times Y$, all $q \in \mathcal{Q}$, $q(A) > 0$ only if $p(A) > 0$)}, inspired by~\cite{michel2022distributionally}. This
allows us to write $q = rp$ , with ${r:\mathcal{X} \times \mathcal{S} \rightarrow \mathbb{R}^+}$  a 
function that acts as a weighting factor.
Given a training set $\Gamma$ sampled from $p$, we can thus reformulate the overall objective, by
substituting $q$ with $rp$ and applying its Lagrangian relaxation, as an optimization problem on $r \in {\mathcal{R} = \{r \,|\, rp \in \mathcal{Q}\}}$:
\begin{align}
&\min_{w_f} \max_{r \in \mathcal{R}} 
\frac{1}{n} \sum_{i=1}^{n} \mathcal{L_Y}(f_{w_f}(x_i), y_i) - \lambda_g  
\frac{1}{n} \sum_{i=1}^{n} {r(x_i,s_i) 
 \mathcal{L_S}(g_{w_g^*}(f_{w_f}(x_i)), s_i)} 
 \\
\text{with } &w_g^* = \argmin_{w_g} 
\frac{1}{n} \sum_{i=1}^{n} \mathcal{L_S}(g_{w_g}(f_{w_f}(x_i)), s_i) 
 \nonumber
\end{align}
With $\lambda_g$ a regularization parameter controlling the trade-off between accuracy and fairness in the predictor model. 
In the following, we describe two constraints, inspired from the DRO literature, that we consider to ensure $q$ keeps the properties of a distribution and avoid pessimistic solutions.

\paragraph{Validity Constraint} 

To ensure~$q$ keeps the properties of a distribution (i.e., $r \in {\cal R}$), previous works in 
DRO (e.g.~\cite{michel2022distributionally}) enforce the constraint $\mathbb{E}_{(x,s)\sim p}  r(x, s) = 1$ during the optimization. 
In the context of fairness, we argue that this constraint is not sufficient to ensure a safe behavior with regard to the fairness criterion, as it allows disturbances in the prior probabilities of the sensitive (i.e., $q(s) \neq p(s)$).  As discussed more deeply in Appendix~\ref{theoanalysis}, 
this 
may lead to a shift of the optimum of the problem, by inducing a stronger mitigation emphasis on samples from the most populated demographic-sensitive group. 
To avoid this issue, we propose to further constrain~$r$ by considering a restricted set $\tilde{\cal R}=\{r \in {\cal R}\,|\, rp \in \tilde{\cal Q}\}$, with $\tilde{\cal Q} \subset {\cal Q}$ such that: $\forall s, q(s)=p(s)$. 
To achieve this, we rely on the following constraint: $\forall s, \mathbb{E}_{p(x|s)} \ r(x,s)=1$.
Besides guaranteeing the desired property $q(s)=p(s)$ (proof in Sec.~\ref{proof_eqprior}), we also note that ensuring these constraints still imply the former one: $\mathbb{E}_{p(x,s)}  r(x, s) = 1$, which guarantees that $q(x,s)$ integrates to $1$ on its support.
We further discuss 
the benefits of this conditional constraint in Section~\ref{empanalysis}.

\paragraph{Shape Constraint}
As discussed in Section~\ref{sec:background-fairdro}, the definition of $\mathcal{Q}$ heavily impacts the desired behavior of the solution. In particular, controlling the shape of the allowed distributions~$q$ is especially crucial in a setting such as ours, where the focus of the mitigation process is done dynamically.
Without any constraint (as proposed by~\cite{mandal2020ensuring}), the mitigation could indeed end up focusing 
on specific points of the dataset where the sensitive reconstruction from $f_{w_f}(X)$ is the easiest, using very sharp distributions~$q$ close to a Dirac.
This may turn particularly unstable and, more critically, could concentrate the majority of fairness efforts on a relatively small subset of samples.
To control the shape of the bias mitigation distribution $q$, 
%
we therefore choose to consider $\cal Q$ as a
KL-divergence ball centered on the training distribution~$p$.
However, similarly to \cite{michel2022distributionally}, 
we do not explicitly enforce the
KL constraint (due to the difficulty of projecting onto the KL ball) and instead use a relaxed form. Using previous notations, the KL constraint takes the simple form $\text{KL}(q || p) = \text{KL}(pr || p) = \mathbb{E}_{p} r \log \frac{pr}{p} = \mathbb{E}_{p} \,r \log r$. 

The spread of $q$ can then be controlled with a temperature weight~$\tau$ in the overall optimization process. 
Setting $\tau=0$ means that no constraint on the distribution of~$r$ is enforced, thus encouraging $r$ to put extreme attention to lower values of $\mathcal{L}_S$. 
On the other hand, higher values of $\tau$ favors distributions $q$ that evenly spreads over the whole dataset, 
hence converging towards a classical globally fair model for highest values (cf. Section \ref{sec:background-groupfairness}).
\paragraph{ROAD Formulation} 
The overall optimization problem of our Robust Optimization for Adversarial Debiasing (ROAD) framework can thus finally be formulated as (full derivation given in \ref{derivationROAD}):
\begin{eqnarray}\label{eq:final-pb}
\min_{w_f} \max_{\substack{r \in 
\tilde{\cal R}}}
\frac{1}{n} \sum_{i=1}^{n} \mathcal{L}_{Y}(f_{w_f}(x_i), y_i) - \lambda_g [ \frac{1}{n} \sum_{i=1}^{n}{({r}(x_i,s_i)\mathcal{L}_{S}(g_{w_g^*}(f_{w_f}(x_i)), s_i)))}  \nonumber \\ + \tau \underbrace{\frac{1}{n} \sum_{i=1}^{n}{({r}(x_i,s_i)\log({r}(x_i,s_i))}}_{\text{KL constraint}}]  \\
\text{with } w_g^* = \argmin_{w_g} \frac{1}{n} \sum_{i=1}^{n} \mathcal{L}_{S}(g_{w_g}(f_{w_f}(x_i)), s_i) \nonumber 
\end{eqnarray}
In the next section, we discuss two implementations of this approach.

\subsection{Two Implementations}
\subsubsection{BROAD: A Non-Parametric Approach}
\label{broad}
Let us first introduce a non-parametric approach,
called Boltzmann Robust Optimization Adversarial Debiasing (BROAD), where each $r(x_i,s_i)$ value results from the inner maximization problem from Eq.\ref{eq:final-pb}. As described below, this inner optimization accepts an analytical solution, whenever $r$ values respect the aforementioned  
conditional  
validity constraints 
(proof in Appendix~\ref{sec:app-proof-BROAD}).

\begin{lemma} 
\label{BROAD_lemma}
(Optimal Non-parametric Ratio) 
Given a classifier $f_{w_f}$ and an adversary $g_{w_g}$, the optimal weight $r(x_i,s_i)$ for any sample from the training set, is given by:\\ 
$$ {r}(x_i,s_i) = \frac{\e^{-\mathcal{L}_{S}(g_{w_g}(f_{w_f}(x_i)), s_i)/\tau}}{\frac{1}{n_{s_i}}\sum_{(x_j,s_j)\in \Gamma, s_j=s_i} 
\e^{-\mathcal{L}_{S}(g_{w_g}(f_{w_f}(x_j)), s_j)/\tau}}
$$

\end{lemma}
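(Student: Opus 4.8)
The plan is to fix the predictor parameters $w_f$ and the adversary parameters $w_g$ and solve only the inner maximization over the weighting function $r$ in Eq.~\ref{eq:final-pb}, since the optimal $r$ for a given $f_{w_f}, g_{w_g}$ is exactly what the lemma characterizes. The term $\frac{1}{n}\sum_i \mathcal{L}_Y(f_{w_f}(x_i),y_i)$ does not depend on $r$, so I would drop it, and factoring out the positive constant $\lambda_g$ turns the inner maximization into the convex minimization
\[
\min_{r \in \tilde{\mathcal{R}}} \ \frac{1}{n}\sum_{i=1}^n r(x_i,s_i)\, L_i + \frac{\tau}{n}\sum_{i=1}^n r(x_i,s_i)\log r(x_i,s_i),
\]
where I abbreviate $L_i := \mathcal{L}_S(g_{w_g}(f_{w_f}(x_i)),s_i)$ and $\tilde{\mathcal{R}}$ is the feasible set defined by $r(x_i,s_i)\ge 0$ together with the conditional validity constraints $\frac{1}{n_s}\sum_{i:\,s_i=s} r(x_i,s_i)=1$ for each $s\in\{0,1\}$, i.e. the empirical counterpart of $\mathbb{E}_{p(x|s)}\,r(x,s)=1$.

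The key observation is that both the objective and the constraints separate across the two sensitive groups: the objective is a sum of per-sample terms, and each normalization constraint couples only the weights within one group. Hence I would solve the two group-wise subproblems independently. For group $s$, I form the Lagrangian with a single multiplier $\mu_s$ for its normalization constraint and set the partial derivative with respect to each $r(x_i,s_i)$ (with $s_i=s$) to zero. The stationarity condition reads $\frac{1}{n}L_i + \frac{\tau}{n}\bigl(\log r(x_i,s_i)+1\bigr) = \frac{\mu_s}{n_s}$, which I solve for $r(x_i,s_i)$ to obtain the Boltzmann form $r(x_i,s_i)=C_s\,\e^{-L_i/\tau}$, where $C_s$ collects all factors depending only on $s$. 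Substituting this back into $\frac{1}{n_s}\sum_{i:\,s_i=s} r(x_i,s_i)=1$ pins down $C_s = \bigl(\frac{1}{n_s}\sum_{j:\,s_j=s}\e^{-L_j/\tau}\bigr)^{-1}$, yielding exactly the claimed expression.

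Finally, I would confirm that this stationary point is the global optimum rather than merely a critical point: the map $t\mapsto t\log t$ is strictly convex, so the objective is strictly convex in $r$, the feasible set is convex (an affine subspace intersected with the nonnegative orthant), and therefore the KKT point is the unique minimizer, equivalently the unique maximizer of the original concave inner objective. I would also note that the obtained solution is strictly positive, so the entropy term is well defined and the constraint $r\ge0$ is inactive, which makes the sign-unconstrained Lagrangian treatment legitimate.

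I expect the only genuinely delicate point to be the bookkeeping of the conditional normalization. Because the validity constraint is imposed per sensitive group rather than globally, the Lagrangian must carry a separate multiplier $\mu_s$ for each value of $s$; this is precisely what produces the group-specific partition function $\frac{1}{n_{s_i}}\sum_{j:\,s_j=s_i}\e^{-L_j/\tau}$ in the denominator. A single global normalization constraint would instead yield one shared partition function summed over all $n$ samples, with the wrong count in place of $n_{s_i}$, so keeping the per-group decomposition explicit throughout is the step I would be most careful about.
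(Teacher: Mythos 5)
Your proof is correct and follows essentially the same route as the paper's: a Lagrangian/KKT stationarity computation for the entropy-regularized inner problem, with the validity constraint pinning down the multiplier and yielding the Boltzmann form $r_i \propto \e^{-\mathcal{L}_S^i/\tau}$. The only differences are minor strengthenings on your side — you carry out the computation directly under the per-group conditional constraint with separate multipliers $\mu_s$ (the paper derives the global-normalization case and then remarks that the conditional case follows by the same argument), and you add the strict-convexity, uniqueness, and positivity observations that the paper leaves implicit.
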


With 
$n_{s_i}=\sum_{i=1}^{n}\mathbbm{1}_{s=s_i}$. This expression allows us to set optimal weights for any sample from the training dataset, at no additional computational cost compared to a classical adversarial fairness approach such as \cite{zhang2018}. 
However, this may induce an unstable optimization process, since weights may vary abruptly for even very slight variations of the classifier outputs. 
Moreover, it implies individuals weights, only interlinked via the outputs from the classifier, hence at the risk of conflicting with our notion of local fairness. 
We therefore propose another -  parametric - implementation, described in the next section, that improves the process by introducing local smoothness in the fairness weights.

\subsubsection{Parametric Approach}
\label{sec:param_app}

To introduce more local smoothness in the fairness weights assigned to training samples, we propose an implementation of the $r$ function via a neural network architecture.  
Our goal is to ensure that groups of similar individuals, who might be neglected in the context of group fairness mitigation (e.g., due to their under-representation in the training population, cf. Fig.~~\ref{fig:local-fairness-pb}),   receive a similar level of attention 
during the training process. However, solely relying  on adversarial accuracy, 
as done in BROAD, may induce many irregularities in such groups. 
The 
lipschitzness of neural networks can add additional implicit locality smoothness assumptions in the input space, thus helping define the distributions $q$ as subregions of the feature space. 
Note that, in this approach, the network architecture therefore plays a crucial role in how local the behavior of $r_{w_r}$ will be: more complex networks will indeed tend to favor more local solutions, for a same value of~$\tau$. In particular, a network of infinite capacity that completes training will have, in theory, the same behavior as BROAD.

To enforce the conditional validity constraint presented earlier, we employ an exponential parametrization with two batch-level normalizations, 
 one for each demographic group. For each sample ($x_i
, y_i,s_i$) in the mini-batch, we define the normalized ratio as: 
$$\forall i,  {r_{w_r}}(x_i,s_i) = \frac{\e^{h_{w_r}(x_i,s_i)}}{\frac{1}{n_{s_i}}\sum_{(x_j,s_j)\in \Gamma, s_j=s_i}\e^{h_{w_r}(x_j,s_j)}}
$$
with $h:{\cal X} \times \{0;1\} \rightarrow \mathcal{R}$ a neural network with weights ${w_r}$.   

To train ROAD, we use an iterative optimization process, alternating between updating the predictor model's parameters $w_f$ and updating the adversarial models' parameters $w_g$ and $w_r$ by multiple steps of gradient descent. 
This leads to a far more stable learning process and 
prevents the predictor classifier from dominating the adversaries. More details are provided in the appendix (see Alg.~\ref{alg:ROAD}). 
\section{Experiments}
\label{sec:experiments}
The experimental evaluation of our approaches is three-fold: First, we assess how effectively ROAD and BROAD ensure Local Fairness in unknown subgroups. Then, we focus on fairness generalization in the face of distribution shift. 
Finally, we conclude the analysis with several ablation studies, allowing a better understanding of the proposed methods.

\subsection{Assessing Local Fairness }
\label{sec:experiments-localfairness}

In this first experiment, we assess how effective ROAD is for generating predictions that are locally fair for unknown subpopulations, while guaranteeing a certain level of global accuracy and global fairness.
For this purpose, we use 3 different popular data sets often used in fair classification, chosen for their diverse characteristics and relevance to various real-world scenarios and described in Appendix~\ref{sec:app-datasets}: Compas~\citep{angwin2016machine}, Law~\citep{wightman1998lsac} and German Credit~\citep{german_credit_data}. Each dataset is split into training and test subsets, and the models described below are trained to optimize accuracy while mitigating fairness with respect to a sensitive attribute~$S$. 

To assess fairness at a local level, various subpopulations chosen among features of $X$, 
i.e. excluding $S$, are selected in the test set. As an example on the Compas dataset, in which the sensitive attribute is \emph{Race}: to create the subgroups, the \emph{Age} feature is discretized into buckets with a 10-year range. These intervals are then combined with the \emph{Gender} feature, identifying 12 distinct subgroups. 
As measuring DI in segments of low population is highly volatile, we filter out subgroups containing less than $50$ individuals (more details in Appendix~\ref{sec:app-subgroups-description}). These subgroups are unknown at training time, and chosen arbitrarily to reflect possible important demographic subgroups (see Section~\ref{sec:experiments-ablation-subgroups} for further discussion). 
Given the set of these subgroups~$\mathcal{G}$, the local fairness is then assessed 
on the worst Disparate Impact value among these subgroups: $\emph{Worst-1-DI}=\max_{g \in \mathcal{G}}|\mathbb{E}_{(x,s) \in g}{(\hat{f}_{w_f}(x) | s=1)} - \mathbb{E}_{(x,s) \in g}{(\hat{f}_{w_f}(x) | s=0)|}$.

To evaluate our approach, we compare our results with the globally fair adversarial models from~\cite{zhang2018} and~\cite{adel2019one}, as well as 3 existing works that aim to address fairness generalization: FairLR~\citep{rezaei2020fairness}, Robust FairCORELS~\citep{ferry2022improving} and CUMA~\citep{wang2023robust} (cf. Appendix~\ref{sec:app-competitors} for more details). 

As local fairness can only be measured against 
global accuracy and global fairness, we evaluate the approaches by plotting the tradeoffs between global accuracy and worst-1-DI  subject to a global DI constraint (we choose $DI\leq0.05$, a common threshold in the fairness literature~\citep{pannekoek2021investigating}).
To ensure a thorough exploration of these tradeoffs, we sweep across hyperparameter values for each algorithm (details of hyperparameter grids in App.~\ref{sec:app-hyperparameters-architecture}). 
Figure~\ref{fig:results-localfairness} shows the resulting Accuracy-Worst-1-DI Pareto curves for each method. For all datasets, ROAD mostly outperforms all other methods. This tends to show how the proposed method efficiently maximizes local fairness, without sacrificing any other desirable criterion too much. On the other hand, BROAD does not always perform as effectively as  ROAD, illustrating the benefit resulting from the local smoothness induced by the use of a neural network. 
Interestingly, despite not including any robustness component, globally fair methods of~\cite{zhang2018} and~\cite{adel2019one} still manage 
to slightly reduce local bias through their global mechanisms. 

\begin{figure}
    \centering
    \includegraphics[width=0.32\linewidth]{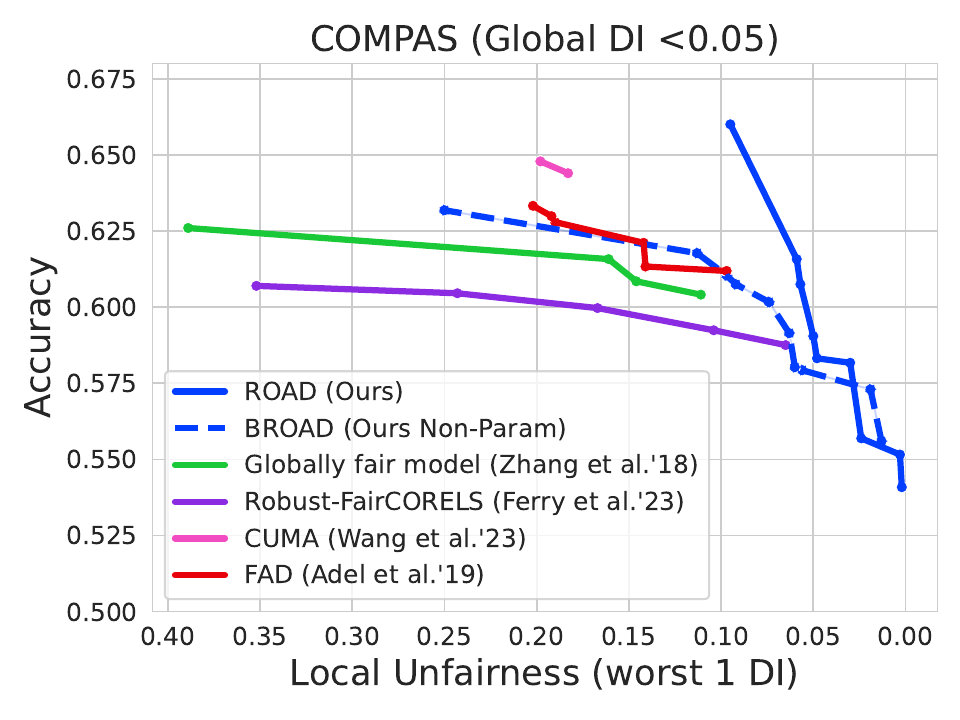}
    \includegraphics[width=0.32\linewidth]{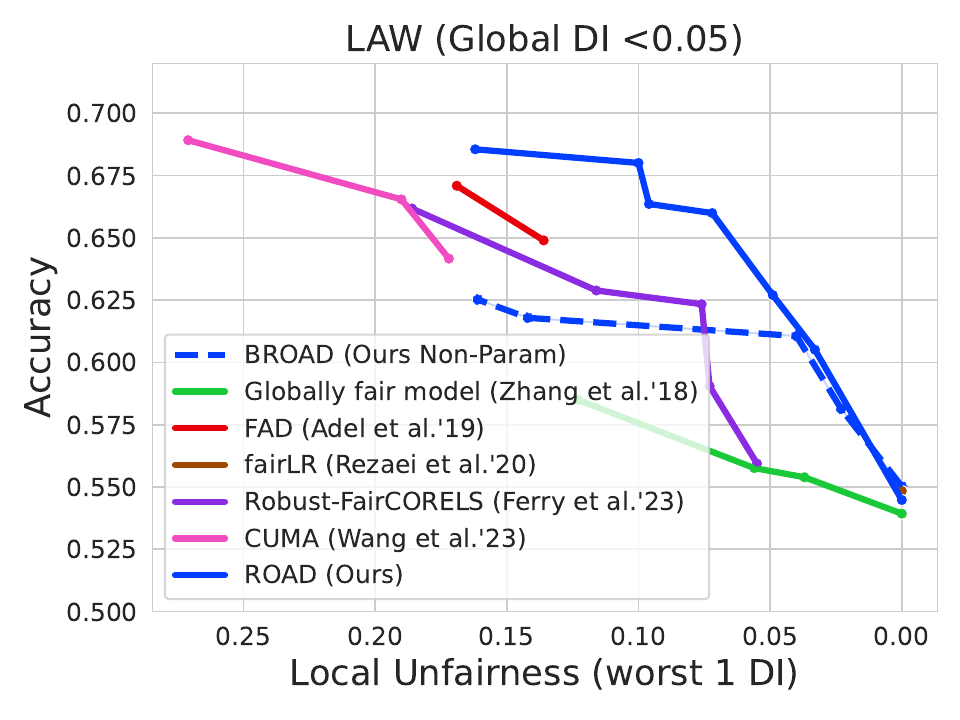}
    \includegraphics[width=0.32\linewidth]{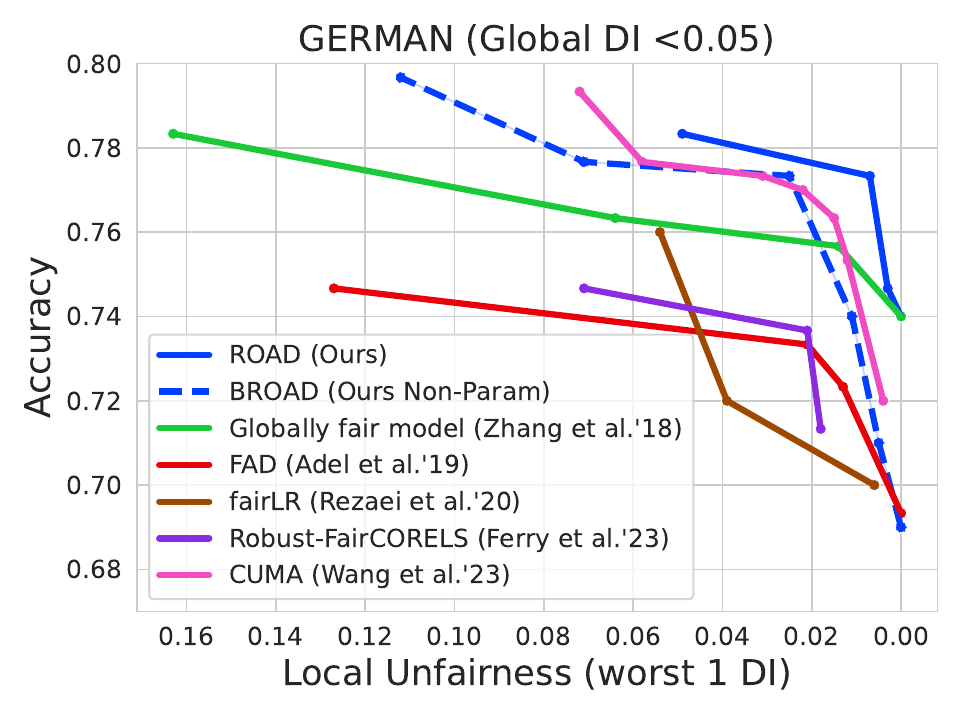}
    \caption{Results for the experiment on Local Fairness. For all datasets, the X-axis is Worst-1-DI, Y-axis is Global accuracy. The curves represented are, for each method, the Pareto front for the results satisfying the imposed global fairness constraint (here, Global DI $< 0.05$ for all datasets). 
    }
    \label{fig:results-localfairness}
\end{figure}




\subsection{Experiments on Distribution Drift}
\label{sec:experiments-drift}

As discussed in Section~\ref{sec:background-fairdro}, DRO-based techniques have been considered before to help with the generalization of fairness. In this section, we therefore aim to show how our approach also leads to a better generalization of fairness in the face of distribution shift in addition to better-protecting subpopulations. For this purpose, we replicate the experimental protocol of~\cite{wang2023robust}: after training classifiers on the training set of the classical Adult dataset (1994), we evaluate the tradeoff between accuracy and global fairness (measured with Equalized Odds (EO)) on the 2014 and 2015 Folktables datasets~\citep{ding2021retiring}, containing US Census data from corresponding years, thus simulating real-world temporal drift.
The same approaches as in the previous section, adapted to optimize for EO (details in Appendix~\ref{sec:app-competitors}), are tested. Once again, the hyperparameters of every method are adjusted to maximize the two considered criteria, and the Pareto front is shown in Fig.~\ref{fig:adult-drift}.

Results on the classical Adult test set (in-distribution, left figure) are somewhat similar for most methods, with CUMA~\citep{wang2023robust} slightly out-performing other methods. However, on drifted test sets (center and right figures), ROAD seems to achieve significantly better results than other methods, including other DRO-based fairness approaches. This suggests that the parametric implementation 
proposed in the paper is better suited to ensure robust behavior.

\begin{figure}
   \centering
   \includegraphics[width=0.32\linewidth]{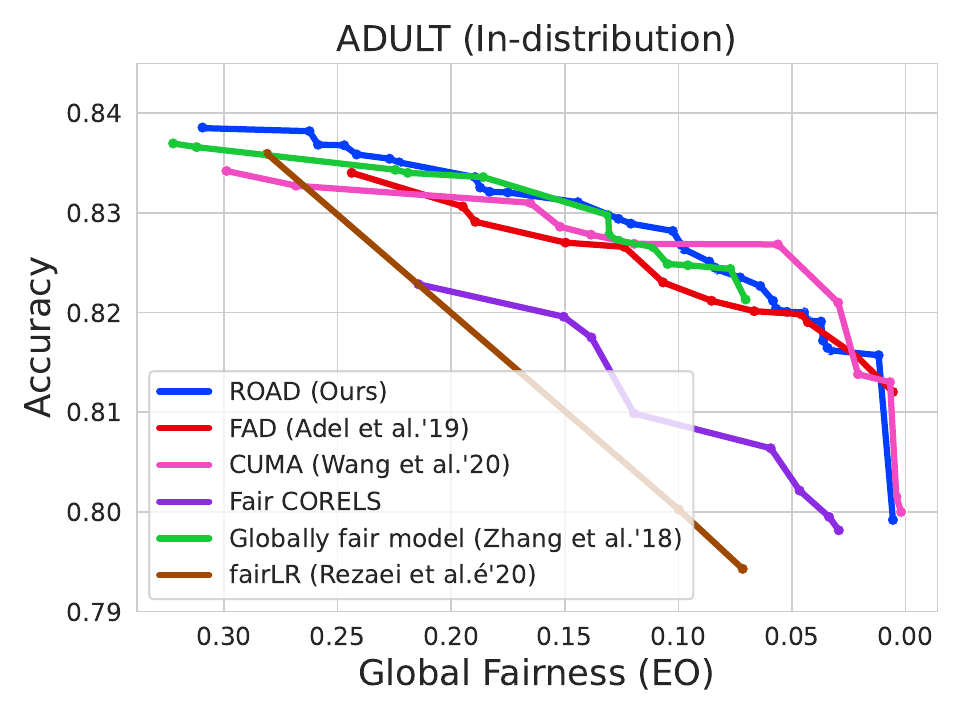}
   \includegraphics[width=0.32\linewidth]{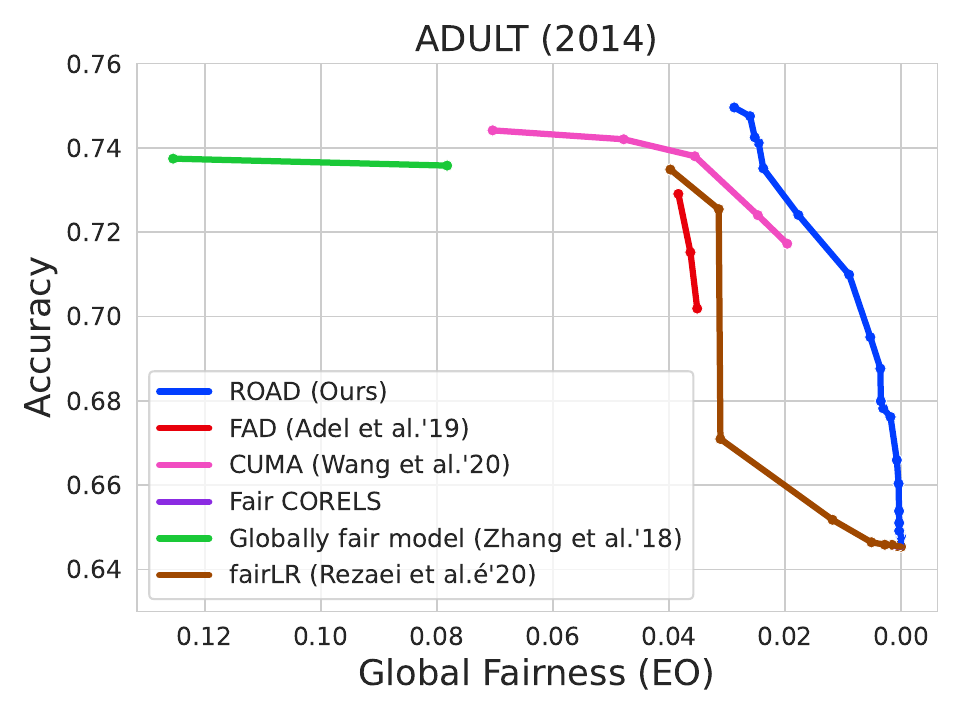}
 \includegraphics[width=0.32\linewidth]{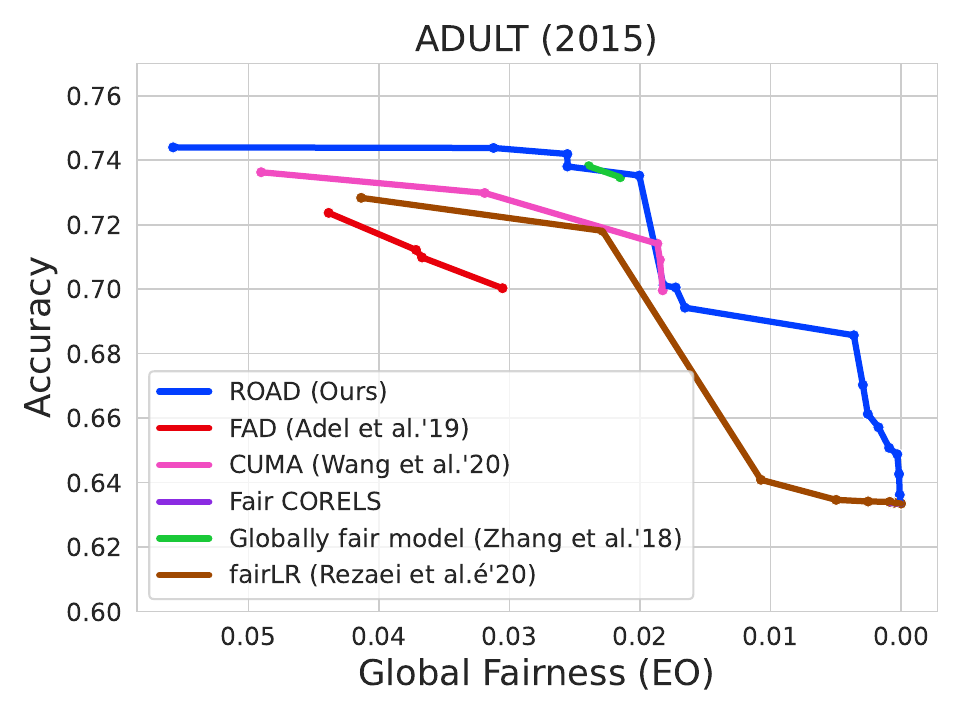}
  \caption{Pareto front results 
  on distribution drift
  using the Adult dataset. For all figures, the X-axis is Equalized Odds; the Y-axis is Accuracy. 
  Left: in-distribution (i.e. Adult UCI in 1994) test dataset; Center and Right: resp. 2014 and 2015 test datasets from Folktables~\citep{ding2021retiring}.}
  \label{fig:adult-drift}
\end{figure}

\subsection{Ablation studies}
\label{sec:experiments-ablation}


\subsubsection{Does the adversary focus on the right subpopulations? How does the temperature parameter help?}
\label{sec:experiments-ablation-temperature}

\begin{figure}
    \centering\includegraphics[width=0.3\linewidth]{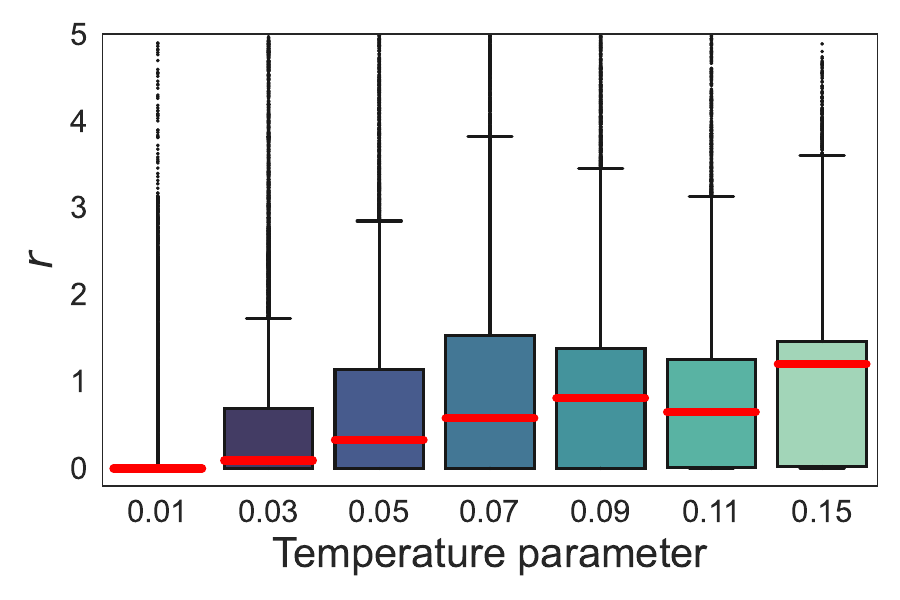}
    \includegraphics[width=0.3\linewidth]{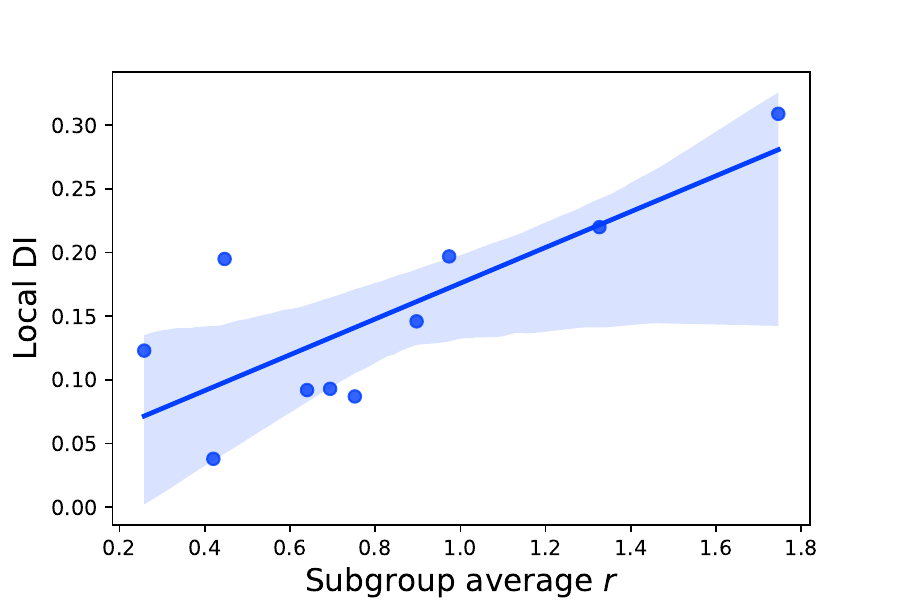}
    \includegraphics[width=0.25\linewidth]{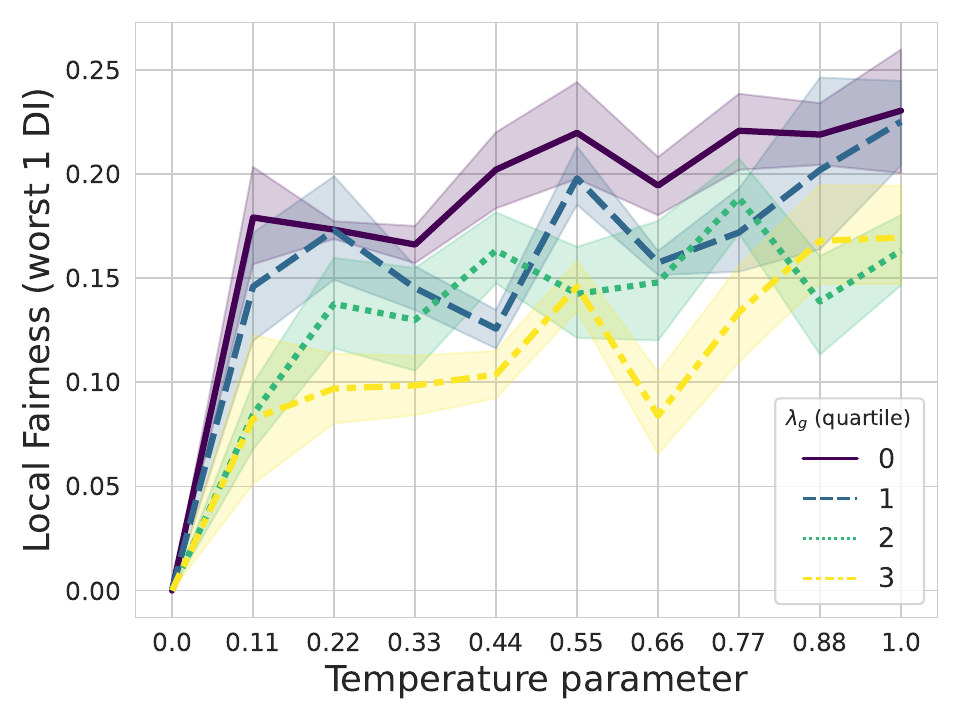}
    \caption{Analysis of the behavior of ROAD on Compas. Left: distribution of $r$ for several values of $\tau$ at epoch 200 (truncated at $r>5$). Center: Relationship between Local DI and the average value of $r$ assigned to instances belonging to the corresponding subgroups. Each dot is a subgroup. Right: Worst-1-DI as a function of $\tau$ for different values for $\lambda_g$ (quartiles between $0.0$ and $10.0$)}
    \label{fig:ablation-temperature}
\end{figure}

The behavior of ROAD depends on 
a temperature hyperparameter~$\tau$, which controls the extent to which the distributions~$q\in \mathcal{Q}$ are allowed to diverge from the training distribution $p$. 
The impact of $\tau$ 
can be observed in the left figure of Fig.~\ref{fig:ablation-temperature} for the Compas dataset. As values of $\tau$ increase, the variance of the distribution of $r$ decreases, going from having most weights close to $0$ and very high importance on a few others, to having most weights $r_i$ lying around $1$.
Choosing the right value of $\tau$ thus helps control the emphasis put on some subpopulations. 

A critical assumption ROAD relies on is that the adversary~$r$ puts more attention on locally unfair regions.
We test this assumption on the Compas dataset (same subgroups as in Section~\ref{sec:experiments-localfairness}) and observe the results in the center figure of Fig.~\ref{fig:ablation-temperature}. For each subgroup $k \in \mathcal{G}$ (blue dots), we measure its local fairness (y-axis) and the average weight $\mathbb{E}_{(x,s) \in k}(r_i(x,s))$ associated to instances of $k$. The graph reveals a correlation between these two notions, suggesting that more emphasis is indeed put on more unfair regions. 
 As a consequence of these two observed results, choosing the value of $\tau$ helps control local bias, as shown in the right picture of Fig.~\ref{fig:ablation-temperature} for various values of $\lambda_g$. 
 The perfect local fairness score achieved when $\tau=0$ is due to a constant model $f_{w_f}$: 
 with no shape constraint imposed, the distribution $r$  concentrates all the fairness effort on each training sample successively, which finally leads to $f(X)=\mathbb{E}[Y]$ for any input. 
 Choosing a greater value of $\tau$ helps regularizing the process, by inducing a distribution $q(x|s)$ closer to $p(x|s)$. 

\subsubsection{How important is the definition of subgroups?}
\label{sec:experiments-ablation-subgroups}

\begin{figure} 
    \centering
    \includegraphics[width=0.95\linewidth]{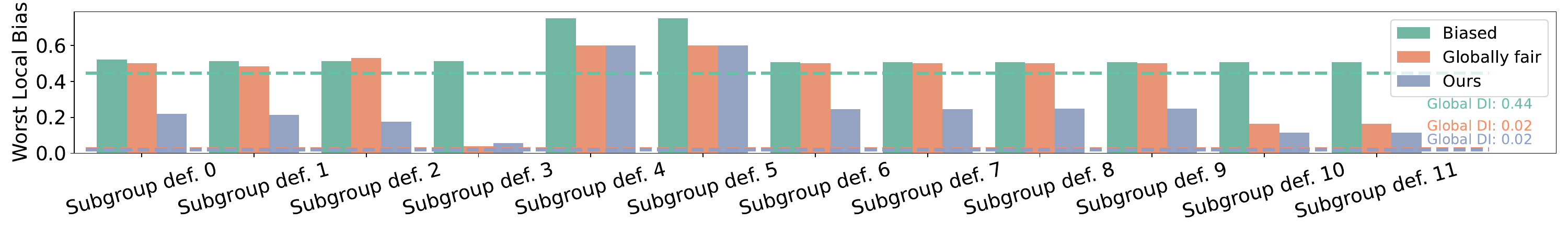}
    \caption{Worst-1-DI scores for subgroups of the Law dataset of various definitions, built by varying \emph{age} bin width and splits along \emph{gender}. Full description or the subgroups is available in Sec.~\ref{sec:app-subgroups-description}
    }
    \label{fig:ablation-subgroups}
\end{figure}

The main motivation for ROAD is its ability to maximize local fairness when the definition of the local subgroups is unknown.
To assess the veracity of this claim, we conduct another experiment where we measure the local fairness of ROAD  
when the definition of these subgroups vary.
Concretely, we train once a biased model, a globally fair model~\citep{zhang2018} and ROAD (with resp. accuracy scores $0.72$, $0.60$, and $0.60$), and measure the local fairness for these models in subgroups of various definitions. These subgroups are defined successively as age bins with a width of 5, 10, 15 and 20, first across the whole population and then across subpopulations of other, non-sensitive, variables. 
Fig.~\ref{fig:ablation-subgroups} shows the local fairness results for the Law dataset (sensitive attribute is \emph{Race}, subgroup attributes are \emph{Age} and \emph{Gender}). As expected, although the worst local DI for ROAD varies when the subgroup definition changes, 
it is almost consistently below the values reached by the globally fair model (except Def.~3 corresponding to the 
largest subgroups). 
This suggests that its tuning is not over-reliant on one subgroup definition, showcasing the flexibility of the approach.

\section{Conclusion}


In this work, we introduced a new problem based on enforcing local fairness in unknown subpopulations.  By leveraging the strengths of adversarial learning and Distributionally Robust Optimization, our proposed framework ROAD provides a powerful approach for this setting, 
address 
the shortcomings of previously proposed DRO-based approaches. Future works include extending our work to settings where the sensitive attribute is not available, and further exploring the optimization of a 3-network adversarial approach. 

\bibliography{iclr2024_conference}
\bibliographystyle{iclr2024_conference}

\newpage
\appendix
\section{Appendix}

This Appendix is composed of seven sections, each providing supplementary information to support the main text. In Section~\ref{derivationROAD}, we detail the derivation of the ROAD objective. In Section~\ref{sec:experiments-ablation-dualcont}, we further discuss the conditional validity constraint introduced in Section~~\ref{sec:proposition-road}, showing analytically and empirically its benefits compared to a global constraint. 
 Our theoretical proof of the non-parametric approach can then be found in Section~\ref{sec:app-proof-BROAD}.
In Section~\ref{App:ablation_studies}, we present additional ablation studies allowing a better understanding of our proposed methods.
Moving forward, Section~\ref{App:Expe_details} outlines the experimental protocols followed in our research, with details on the datasets and competitors considered, while Section~\ref{sec:desc_algo} finally delves into the details of the proposed algorithms.

\subsection{Derivation of ROAD (eq. \ref{eq:final-pb}, with Global and Conditional Normalization)}
\label{derivationROAD}

We start from the initial formulation of the problem as stated in equation \ref{eq:fair-dro0}, that we relax, and include the KL regularization to get 
the quantity $L_{\cal Q}$ to optimize: 

$$L_{\cal Q}\triangleq\mathbb{E}_{p(x,y,s) }\left[{\;\mathcal{L_Y}(f_{w_f}(x), y)}\right] - \lambda_g \left[\mathbb{E}_{q(x,s)}\left[ {\mathcal{L_S}(g_{{w_g}^{*}}(f_{w_f}(x), s))}\right] + \tau \mathbb{E}_{q(x,s)}\left[ \log \frac{q(x,s)}{p(x,s)}\right]\right]$$
where $q \in {\cal Q}$, with ${\cal Q}$ the set of joint distributions over support of $(X,S)$ that are absolutely
continuous  with respect to $p(X,S)$. 

Let us write    $r(x,s)\triangleq\frac{q(x,s)}{p(x,s)}$. Thus, using the importance sampling trick, we get: 
\begin{multline*}
L_{\cal Q}\triangleq\mathbb{E}_{p(x,y,s) }\left[{\;\mathcal{L_Y}(f_{w_f}(x), y)}\right] - \\ \lambda_g \left[\mathbb{E}_{p(x,s)}\left[r(x,s) {\mathcal{L_S}(g_{{w_g}^{*}}(f_{w_f}(x), s))}\right] + \tau \mathbb{E}_{p(x,s)}\left[r(x,s) \log r(x,s)\right]\right]
\end{multline*}

Following this, considering a training dataset $\Gamma=(x_i,y_i,s_i)_{i=1}^n$ sampled from $p(X,Y,S)$, our optimization problem could be given as (which we refer to as single normalization):
\begin{eqnarray*}
\min_{w_f} \max_{\substack{r \in {\cal R}}} \frac{1}{n} \sum_{i=1}^{n} \mathcal{L}_{Y}(f_{w_f}(x_i), y_i) - \lambda_g [ \frac{1}{n} \sum_{i=1}^{n}{({r}(x_i,s_i)(\mathcal{L}_{S}(g_{w_g^*}(f_{w_f}(x_i)), s_i))}  \nonumber \\ + \tau \underbrace{\frac{1}{n} \sum_{i=1}^{n}{({r}(x_i,s_i)\log({r}(x_i,s_i))}}_{\text{KL constraint}}]  \\
\text{with } w_g^* = \argmin_{w_g} \frac{1}{n} \sum_{i=1}^{n} \mathcal{L}_{S}(g_{w_g}(f_{w_f}(x_i)), s_i) \nonumber 
\end{eqnarray*}
where ${\cal R}=\{r|p r \in {\cal Q}\}$ is an uncertainty set ensuring that $q=pr$ is a distribution (i.e., respecting $\mathbb{E}_p(x,s)[r(x,s)]=1$).

As explained in section \ref{sec:proposition-road} (with further analysis in section \ref{theoanalysis}), this set is not restrictive enough to guarantee a stable optimization for our fairness objectives, for settings implying an adversarial trained on a dataset with unbalanced proportions of sensitive values (i.e., $p(S)$ is not uniform). 

To go further, we propose to restrict to $\tilde{\cal Q} \subset {\cal Q}$,  such that any $q \in \tilde{\cal Q}$ respects marginal equalities $q(s)=p(s)$ for any $s \in S$. This leads to: 
\begin{multline*}
    L_{\tilde{\cal Q}}=\mathbb{E}_{p(x,y,s) }\left[{\;\mathcal{L_Y}(f_{w_f}(x), y)}\right] - \\  \lambda_g \left[\mathbb{E}_{p(s)}\mathbb{E}_{q(x|s)}\left[ {\mathcal{L_S}(g_{{w_g}^{*}}(f_{w_f}(x), s))}\right] + \tau \mathbb{E}_{p(s)}\mathbb{E}_{q(x|s)}\left[ \log \frac{q(x,s)}{p(x,s)}\right]\right]
\end{multline*}

From the continuity of distributions in $\cal Q$ w.r.t. $p$ and the fact that $q(s)=p(s)$ for any $q \in \tilde{\cal Q}$, $p(x|s)>0$ whenever $q(x|s)>0$. Thus, using the importance sampling trick and  introducing $r(x|s)\triangleq\frac{q(x|s)}{p(x|s)}$, we get:  
\begin{multline*}
    L_{\tilde{\cal Q}}=\mathbb{E}_{p(x,y,s) }\left[{\;\mathcal{L_Y}(f_{w_f}(x), y)}\right] - \\  \lambda_g \left[\mathbb{E}_{p(s)}\mathbb{E}_{p(x|s)}\left[r(x|s){\mathcal{L_S}(g_{{w_g}^{*}}(f_{w_f}(x), s))}\right] + \tau \mathbb{E}_{p(s)}\mathbb{E}_{p(x|s)}\left[r(x|s) \log r(x|s)\right]\right]
\end{multline*}

Noting that for any $q \in {\tilde{\cal Q}}$, $r(x,s)=r(x|s) r(s)=r(x|s)$, this leads to the following optimization problem, with $\tilde{\cal R}=\{r|p r \in \tilde{\cal Q}\}$: 

\begin{eqnarray*}\label{eq:final-pb2}
\min_{w_f} \max_{\substack{r \in \tilde{\cal R}} 
} \frac{1}{n} \sum_{i=1}^{n} \mathcal{L}_{Y}(f_{w_f}(x_i), y_i) - \lambda_g [ \frac{1}{n} \sum_{i=1}^{n}{({r}(x_i,s_i)(\mathcal{L}_{S}(g_{w_g^*}(f_{w_f}(x_i)), s_i))}  \nonumber \\ + \tau \underbrace{\frac{1}{n} \sum_{i=1}^{n}{({r}(x_i,s_i)\log({r}(x_i,s_i))}}_{\text{KL constraint}}]  \\
\text{with } w_g^* = \argmin_{w_g} \frac{1}{n} \sum_{i=1}^{n} \mathcal{L}_{S}(g_{w_g}(f_{w_f}(x_i)), s_i) \nonumber 
\end{eqnarray*}
which is equivalent to eq \ref{eq:final-pb}, since $\tilde{\cal R}=\{r \in {\cal R}| \forall s, \mathbb{E}_{
p(x|s)} \ r(x, s)= 1\} $. 

\subsection{Why does using two conditional normalization constraints instead of a single global one help?}
\label{sec:experiments-ablation-dualcont}

\subsubsection{Theoretical Implication of the Double  Normalization Constraint}
\label{proof_eqprior}

This section aims at proving that using two conditional normalizations of the $r$ values, one for each sensitive, leads to guarantee $p(s)=q(s)$, as claimed in the main paper and advised in the next sections. 

Assuming we have a function $r$ that respects the property: $\mathbb{E}_{
p(x|s)} \ r(x, s)= 1$ for both $s$, we can first start by observing that the classical global constraint also holds: $\mathbb{E}_{
p(x,s)} \ r(x, s)= \mathbb{E}_{
p(s)} \mathbb{E}_{p(x|s)} \ r(x, s)=  \sum_s 
p(s) = 1$. This allows us to consider $r(x,s)=\frac{q(x,s)}{p(x,s)}$ as a ratio of valid joint distributions, with  $q(x,s)$ which integrates to one over the support of $p$. 

Then, let us analyze the induced marginal $q(s)$ resulting from such implicit distribution: 
$$q(s)=\int q(x,s) dx = \int p(x,s) \frac{q(x,s)}{p(x,s)} dx = p(s) \mathbb{E}_{p(x|s)} r(x,s) dx = p(s)$$

As a result, considering the uncertainty set $\tilde{\cal R}=\{r \in {\cal R}| \forall s, \mathbb{E}_{
p(x|s)} \ r(x, s)= 1\} $ actually implies that the induced prior $q(s)$ equals the sensitive prior $p(s)$ observed in the dataset.

\subsubsection{Analysis of Normalized Weights for a Fully Fair Model} 
\label{theoanalysis}


In the context of fairness, we argue that ensuring a classical global normalization constraint $\mathbb{E}_{p(x,s)}  r(x, s) = 1$ during optimization is not sufficient to provide an accurate behavior with regards to the fair metric, as it may lead to concentrate the majority of the fairness effort to a specific sensitive subgroup. 

To understand why, let us go back to the justification of using an adversarial for group fairness as introduced in \cite{zhang2018}, taking demographic parity as an illustrative example (while the same remains true for other objectives, such as equalized odds). The objective of Demographic Parity is to obtain a classifier that allocates equal chances to both subgroups of the population given the sensitive $S$. Formally, a classifier $f$ that respects: $P(\hat{Y}=1|S=1)=P(\hat{Y}=1|S=0)$, where we note $\hat{Y}=\hat{f}(X)=I(f(X)>0.5)$. Equivalently in the binary case, we need to ensure $\mathbb{E}[\hat{Y}|S=1]=\mathbb{E}[\hat{Y}|S=0]$, or again $P(\hat{Y}=1|S=1)=P(\hat{Y})$. During optimization, it is difficult to build regularization terms on comparison of such quantities, both of them implying the classifier. Besides the non-differentiability induced by the use of an indicator function in the loss, it would require to backpropagate through estimators from 
both populations, which may either be subject to high variance or intractability. 
Rather, adversarial fairness \cite{zhang2018} proposes to use an adversary network $g$ that attempts to reconstruct the sensitive $S$ from $f(X)$. The adversary is optimal when it outputs $\mathbb{E}[S|f(X)]$ accurately for any input, and the classifier is fully fair when $\mathbb{E}[S|f(X)]=P(S=1|f(X))=P(S=1)$.  In that case, we indeed have $P(S=1,f(X))=P(S=1)P(f(X))$, which means independence and finally $P(\hat{f}(X)|S)=P(\hat{f}(X))$. 

Now, let us assume a situation where the classifier is fully fair, which thus means that $P(S|f(X))=P(S)$ for any $x \in X$. Let us also assume that we use an optimal adversary $g^*(f(X))$, that accurately outputs $P(S|f(X))$ for each $(X,S)$ given $f$. Let us look at 
the optimal values of $r$ given $f$ by using our non-parametric formulation BROAD (see section \ref{broad}).  
While using a single global normalization constraint $\mathbb{E}_{p(x,s)}  r(x, s) = 1$ in such a situation of a fully fair classifier,  
we have: $r_i \propto e^{-\mathcal{L}_{S}(g^*(f(x_i)), s_i))/\tau} = e^{\log(P(S=s_i))/\tau}$ for any sample $i$ of the dataset.  Without loss of generality, let us consider that   $P(S=1)>P(S=0)$. In that case, we thus have:  $e^{\log(P(S=1))/\tau}/Z>e^{\log(P(S=0))/\tau} /Z$,  
for any constant $Z>0$, including $Z=\frac{1}{n}\sum_{(x_i,s_i)\in \Gamma} e^{\log(P(S=s_i))/\tau}$.  
This means that in that situation, 
any sample $i$ such that  $s_i=1$ obtains a weight $r_i$ that is greater than the one for any sample $j$ such that $s_j=0$. In other words, with a global normalization constraint, samples from the most populated demographic-sensitive group are more constrained than other ones. Improving accuracy will thus be easier for the least populated group, leading to a model with unbalanced error between the two populations.  

On the other hand,  when considering the conditional validity constraints, we get for that situation: 

$$r_i=\frac{e^{\log(P(S=s_i))/\tau}}{\frac{1}{n_{s=s_i}}\sum_{j,s_j=s_i} e^{\log(P(S=s_i))/\tau}} =1  \qquad \forall i \in {\Gamma}$$  
which ensures a uniform fairness effort for every sample point from the dataset. In that ideal situation, this means that $q(x,s)=p(x,s)$, coming back to a classical adversarial fairness approach such as in \cite{zhang2018}.

\subsubsection{Empirical Observations under Practical Settings}
\label{empanalysis}

\begin{figure}[h!]
    \centering
    \includegraphics[width=0.3\linewidth]{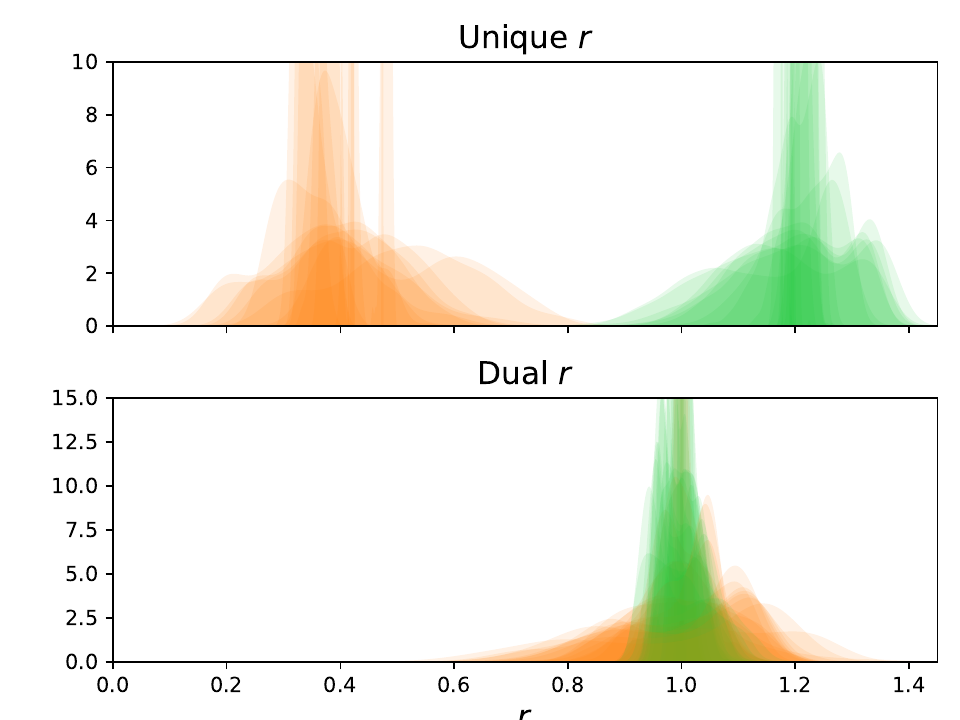}
    \includegraphics[width=0.3\linewidth]{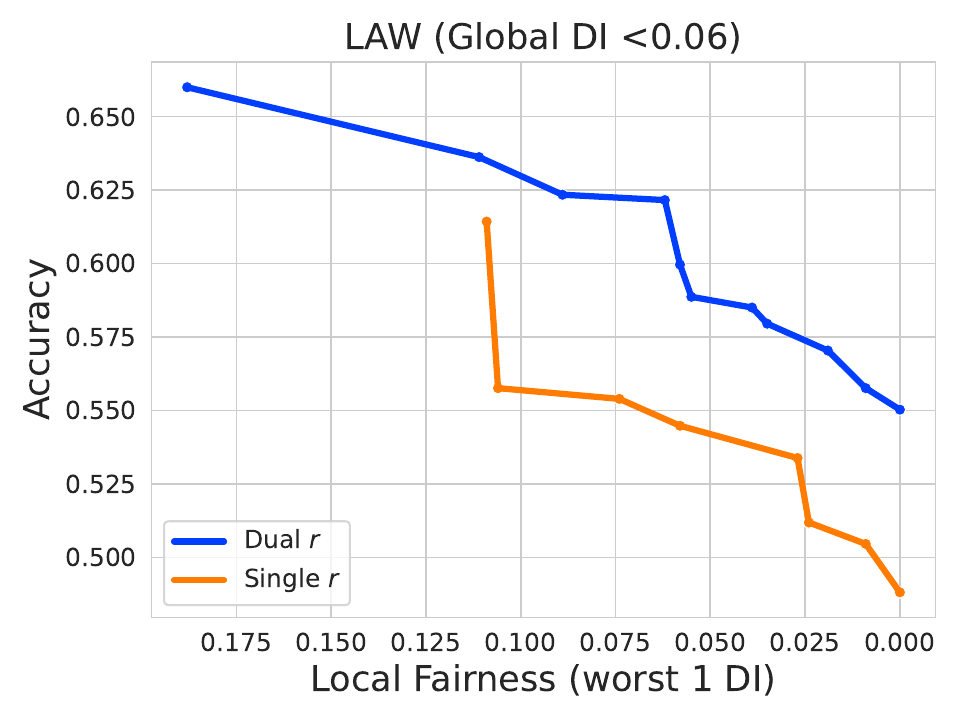}
    \caption{Effect of the conditional constraint ${\mathbb{E}_{(x|s=1)} \ r(x, s) = \mathbb{E}_{(x|s=0)} \ r(x, s) = 1}$. Left: distributions of $r(x|s=0)$ (green) and $r(x|s=1)$ (orange). Right: Pareto fronts of the Accuracy-Local Fairness scores for the Law dataset.
    }
    \label{fig:ablation-dualcont}
\end{figure}

In the previous section, we have shown that ROAD with a single global validity normalization over-constrains the most populated group regarding the sensitive attribute, in the case of a fully fair classifier. We claim that this analysis of an extreme setting highlights a general behavior of the method, which better balances fairness constraints when using our conditional validity normalization.  

In this section, we propose to verify this assumption empirically, by considering the Law dataset, where the sensitive distribution is heavily imbalanced ($P(S=1)\approx(1/4)P(S=0)$). 
In the left image of Fig.~\ref{fig:ablation-dualcont} 
is shown the distribution of $r$ values assigned to instances verifying $S=1$ (orange) and $S=0$ (green) for various training iterations, using a global normalization in the top-most graph, and the conditional normalization on the bottom-most one. Using the global normalization, we observe that no overlap ever happens between the $r$ values of these two sensitive groups.  As a result, most part of the fairness effort is supported by individuals from the largest population $S=0$. 
This is not the case with our proposed conditional normalization, which results in better local fairness (right image of fig.~\ref{fig:ablation-dualcont}).


\subsection{Theoretical Proof: BROAD is a Boltzmann distribution}
\label{sec:app-proof-BROAD}

This section contains proof for Lemma~\ref{BROAD_lemma}, that we rewrite below:

\begin{lemma} 
(Optimal Non-parametric Ratio) 
Consider the optimization problem (inner maximization problem, on $w_r$, of Equation~\ref{eq:final-pb}):
\begin{align}
\max_{w_r} \mathbb{E}_{(x, y, s) \sim p}{({r}(x,s)(\mathcal{L}_{S}(g_{w_g^*}(f_{w_f}(x)), s))} + \tau \, \mathbb{E}_{(x, y, s) \sim p}{({r}(x,s)\log{r}(x,s)}  \\
\text{with } w_g^* = \argmin_{w_g} \mathbb{E}_{(x, y, s) \sim p} \mathcal{L}_{S}(g_{w_g}(f_{w_f}(x)), s)
 \nonumber \\
 \text{Under the global validity constraint: } \mathbb{E}_{(x,y,s)\sim p}\,r(x, s) = 1 \nonumber 
\end{align}
The solution to this problem can be rewritten as an inverted Boltzmann distribution with a multiplying factor $n$:
$$ {r}(x_i,s_i) = \frac{\e^{-\mathcal{L}_{S}(g_{w_g}(f_{w_f}(x_i)), s_i)))/\tau}}{\frac{1}{n}
\sum_{j=1}^{n}\e^{-\mathcal{L}_{S}(g_{w_g}(f_{w_f}(x_j)), s_j))/\tau}}
$$
\end{lemma}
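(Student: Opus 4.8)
The plan is to recognize the inner problem as a finite-dimensional, strictly convex program and solve it exactly with a single Lagrange multiplier. Since $r$ only ever enters through its values at the $n$ training points, I treat the $n$ scalars $r_i \triangleq r(x_i,s_i)$ as the optimization variables. After factoring the positive constant $\lambda_g$ out of the inner maximization over $r$ in Equation~\ref{eq:final-pb}, the maximization of the term carrying the coefficient $-\lambda_g$ becomes the minimization of
\[
\Phi(r) = \frac{1}{n}\sum_{i=1}^n r_i\,\ell_i + \frac{\tau}{n}\sum_{i=1}^n r_i \log r_i, \qquad \ell_i \triangleq \mathcal{L}_S\!\big(g_{w_g}(f_{w_f}(x_i)),s_i\big),
\]
subject to the global validity constraint $\frac{1}{n}\sum_i r_i = 1$, with the positivity of each $r_i$ implicitly demanded by the $r\log r$ term. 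Because $\tau>0$, the map $t \mapsto t\log t$ is strictly convex, so $\Phi$ is strictly convex on the affine constraint set; this observation is the whole reason I may conclude that any Lagrangian stationary point is the unique global minimizer, sparing me a second-order check.

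First I would form the Lagrangian $J(r,\mu) = \Phi(r) - \mu\big(\frac{1}{n}\sum_i r_i - 1\big)$ and impose $\partial J/\partial r_i = 0$ for every $i$. Differentiating gives $\frac{1}{n}\big(\ell_i + \tau(\log r_i + 1)\big) - \frac{\mu}{n} = 0$, i.e. $\tau\log r_i = \mu - \tau - \ell_i$. Exponentiating yields $r_i = C\,\e^{-\ell_i/\tau}$ with $C \triangleq \e^{\mu/\tau - 1}$ a single constant common to all $i$; note that this form is automatically strictly positive, so the implicit positivity requirement holds without introducing any inequality multiplier.

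The final step is to pin down $C$ by substituting $r_i = C\,\e^{-\ell_i/\tau}$ into the constraint $\frac{1}{n}\sum_i r_i = 1$, which forces $C = \big(\frac{1}{n}\sum_{j=1}^n \e^{-\ell_j/\tau}\big)^{-1}$ and hence
\[
r(x_i,s_i) = \frac{\e^{-\mathcal{L}_S(g_{w_g}(f_{w_f}(x_i)),s_i)/\tau}}{\frac{1}{n}\sum_{j=1}^n \e^{-\mathcal{L}_S(g_{w_g}(f_{w_f}(x_j)),s_j)/\tau}},
\]
the claimed inverted Boltzmann distribution with multiplying factor $n$. For the conditional-normalization form stated in Lemma~\ref{BROAD_lemma}, the only modification is to replace the single constraint by the two constraints $\frac{1}{n_s}\sum_{i:s_i=s} r_i = 1$; introducing one multiplier per value of $s$, the stationarity conditions decouple across the two groups, so running the identical computation within each group replaces $C$ by the per-group normalizer $\big(\frac{1}{n_{s}}\sum_{j:s_j=s}\e^{-\ell_j/\tau}\big)^{-1}$.

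I do not anticipate a genuine obstacle, as this is the textbook maximum-entropy / Gibbs-variational calculation. The two points that merely require care are the sign bookkeeping — the inner $\max$ over $r$ of a term weighted by $-\lambda_g$ turns into a minimization, which is precisely what produces the $-\ell_i/\tau$ in the exponent rather than $+\ell_i/\tau$ — and the justification that the stationary point is globally optimal rather than only critical, which the strict convexity of $\Phi$ settles at once.
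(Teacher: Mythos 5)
Your proof is correct and takes essentially the same route as the paper's: introduce a Lagrange multiplier for the normalization constraint, solve the stationarity condition to obtain $r_i \propto \e^{-\mathcal{L}_S^i/\tau}$, and recover the normalizer by substituting back into the constraint. Your two refinements — recasting the inner $\max$ of the $-\lambda_g$-weighted term explicitly as a convex minimization, and invoking strict convexity of $t\log t$ to certify that the stationary point is the unique global optimizer — are points the paper's KKT computation leaves implicit, but they do not change the underlying argument.
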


\begin{proof}
The proof results from a direct application of the Karush-Kuhn-Tucker conditions.
Since there is no ambiguity, we use the following lighter notation for the sake of simplicity: $r$ instead of $r(x,s)$; $\mathcal{L}_S$ instead of $\mathcal{L}_{S}(g_{w_g^*}(f_{w_f}(x)), s))$; and $\mathbb{E}$ instead of $\mathbb{E}_{(x, y, s) \sim p}$.
Using these notations and writing the above problem in its relaxed form w.r.t. the distribution constraint, we obtain the following formulation:
$$
\max_{r} \mathbb{E}{(r\mathcal{L}_{S})} +\tau \, \mathbb{E}{({r} \log{r})}
- \kappa ( 1 - \mathbb{E}(r)) 
 \nonumber
$$

Following the Karush-Kuhn-Tucker conditions applied to the derivative of the Lagrangian function~$L$ of this problem in $r_i$, we obtain:

\begin{equation}    
\frac{\partial L}{\partial r_i}=0 \Leftrightarrow \mathcal{L}^i_{S} + \tau (1+ \log r_i)  + \kappa = 0 \Leftrightarrow r_i= \e^{\frac{-\kappa - \mathcal{L}^i_S}{\tau} -1}
\label{eq:proof-1}
\end{equation}

With $\mathcal{L}^i_S=\mathcal{L}_S(g_{w_g^*}(f_{w_f}(x_i)), s_i)$ the i-th component of $\mathcal{L}_S$.
The KKT condition on the derivative in $\kappa$ gives:
${\frac{\partial L}{\partial \kappa}=0 \Leftrightarrow \mathbb{E}(r) = 1}$. 
Combining these two results, we thus obtain: 
$$
\mathbb{E}(r) = \frac{1}{n} \sum_{j = 1}^n r_j = \frac{1}{n} \sum_{j = 1}^n \e^{\frac{-\kappa - \mathcal{L}^j_S}{\tau} -1} = 1 \Leftrightarrow \e^{-\frac{\kappa}{\tau} - 1} = \frac{1}{\frac{1}{n}\sum_{j=1}^n \e^{-\frac{ \mathcal{L}^j_S}{\tau}}}
$$

Which again gives, reinjecting this result in Eq.~\ref{eq:proof-1}:
$$r_i = \frac{\e^{-\frac{\mathcal{L}^i_S}{\tau}}}{\frac{1}{n} \sum_{j=1}^n \e^{-\frac{\mathcal{L}^j_S}{\tau}}}$$
This leads to the form of a Boltzmann distribution (ignoring a multiplication factor $n$), which proves the result.
\end{proof}

The same proof can be derived under the conditional validity constraint.


\subsection{Additional Ablation Studies}
\label{App:ablation_studies}

In this section, we present several additional results, that were not included in the paper due to lack of space.

\subsubsection{Does the complexity of the two adversarial
networks have an impact on the quality of the results?}

In this section, we investigate the impact of the complexity of the two adversarial networks $g$ and $r$ on the quality of the results. 
Previous works, such as \cite{grari2019fair}, have studied the influence of the adversary $g$ complexity on the context of fair adversarial learning objectives and show that a complex architecture can help to achieve significantly better results than a simple one (i.e., a logistic regression).  

Figure~\ref{fig:ablation-complexities} presents the results of this study on the LAW dataset. We compare three levels of complexity for each network: simple (linear predictor), medium (only one hidden layer with 32 neurons with a ReLu activation function), and complex (three hidden layers: 64 neurons, ReLu, 32 neurons, Relu, 16 neurons, Relu). The results show that a more complex adversary $g$ tends to yield better results, which is consistent with the observations made by \cite{grari2019fair} (i.e., a complex network achieves better performance than a simple logistic regression). One possible reason is that a single logistic regression may not retain enough information to predict the sensitive attribute accurately.

In contrast, for network $r$, a simpler linear prediction such as logistic regression is quite efficient (please note that due to the exponential parametrization, an exponential activation is applied to it), while a more complex architecture results in lower performance. 
As explained in Section~\ref{sec:param_app}, the network architecture plays a crucial role in determining the local behavior of $r_{w_r}$. More complex networks tend to favor more local solutions for a given value of $\tau$. In particular, a network of infinite capacity that completes the training will, in theory, exhibit the same behavior as BROAD, thereby yielding more pessimistic solutions. 

This underscores the importance of carefully considering the complexities of the adversaries when designing a fair adversarial learning framework. Our analysis suggests that a more complex adversary $g$ is preferable, while a simpler network $r$ can be more efficient. Further research is needed to better understand the interplay between these complexities and develop strategies for selecting the optimal combination for a given problem setting.



\begin{figure}[h!]
    \centering
    \includegraphics[width=0.45\linewidth]{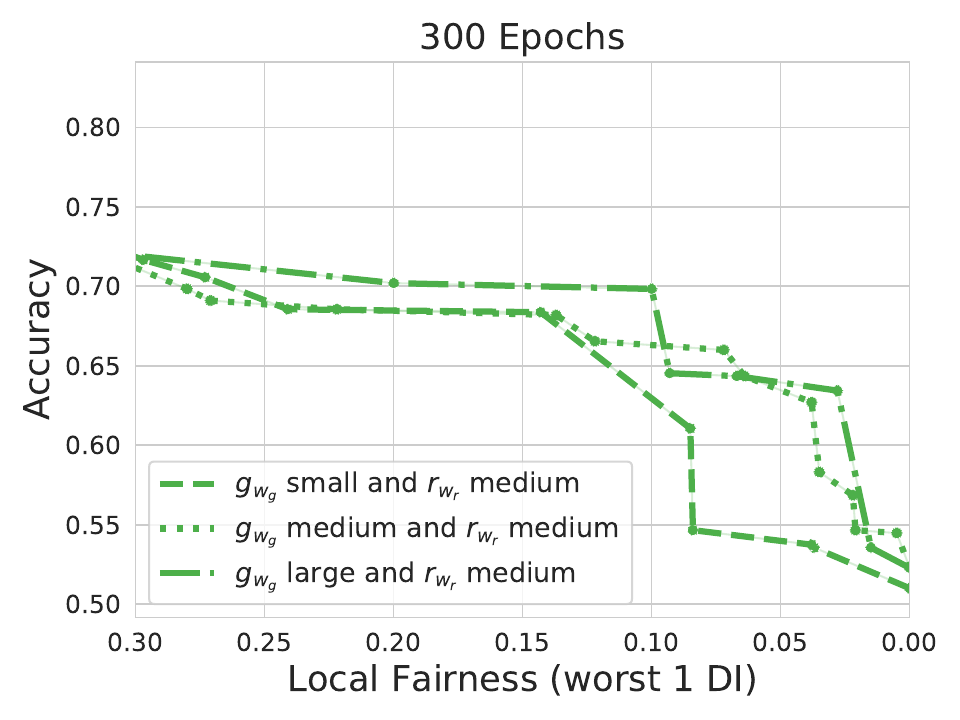}
    \includegraphics[width=0.45\linewidth]{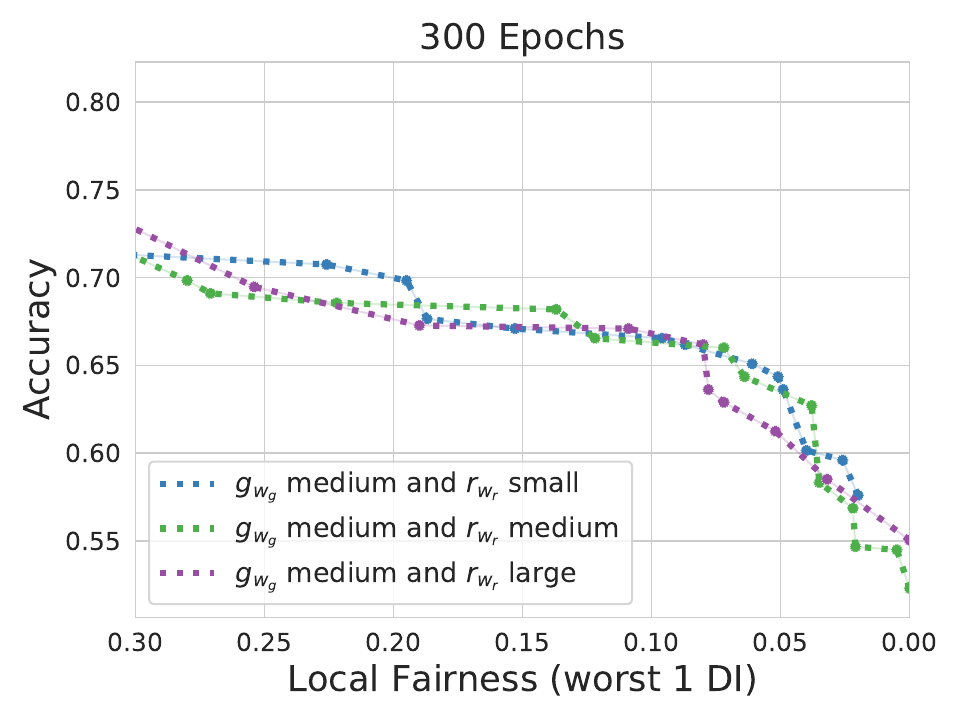}
    \caption{Impact of the complexity of adversarial networks g and r on the quality of results on the  LAW dataset. The comparison is carried on three complexity levels: simple (a simple regression), medium (only one hidden layer), and complex (three hidden layers). The curves represented are, for each method, the Pareto front for the results satisfying the imposed global fairness constraint (here, Global DI $< 0.05$ for all datasets). 
    }
    \label{fig:ablation-complexities}
\end{figure}

\subsubsection{Assessing Local Fairness on the Worst-3-DI instead of the Worst-1-DI
}

\begin{figure}[ht!]
    \centering
    \includegraphics[width=0.32\linewidth]{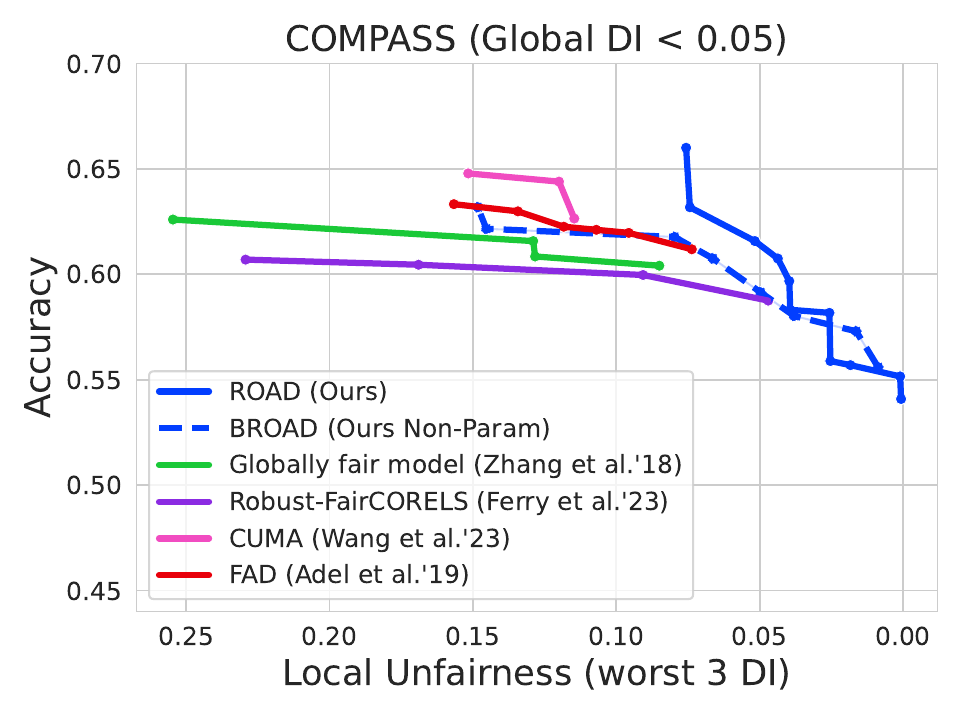}
    \includegraphics[width=0.32\linewidth]{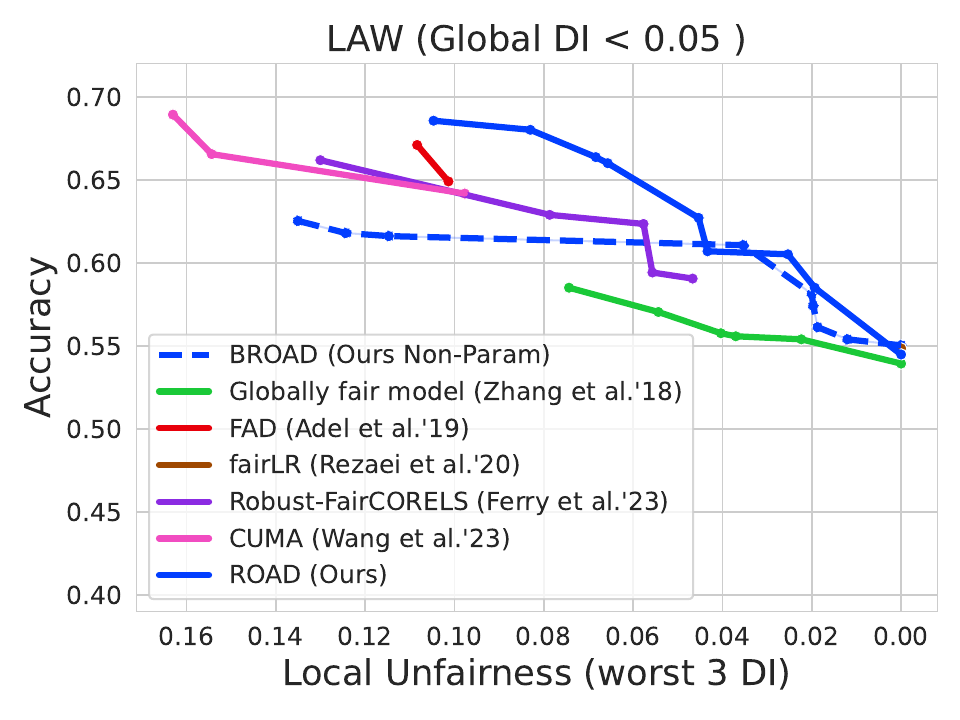}
    \includegraphics[width=0.32\linewidth]{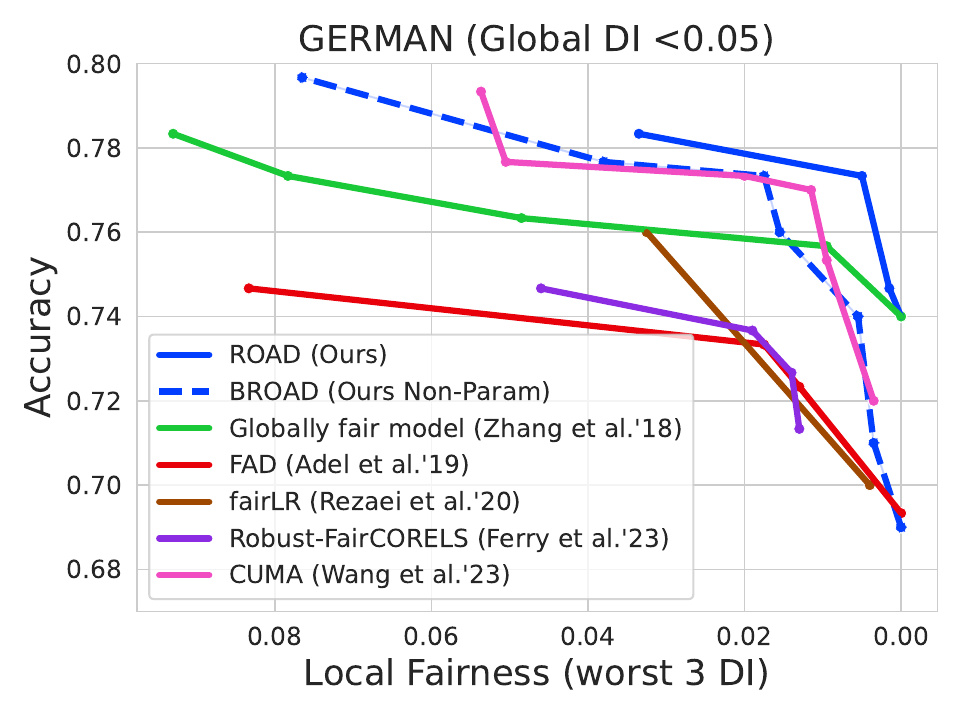}
    \caption{Results for the experiment on Local Fairness. For all datasets, the X-axis is Worst-3-DI, Y-axis is Global accuracy. The curves represented are, for each method, the Pareto front for the results satisfying the imposed global fairness constraint (here, Global DI $< 0.05$ for all datasets). 
    }
    \label{fig:results-localfairness_worst3DI}
\end{figure}

In this experiment, we aim to gain a deeper understanding of our methods' behavior. Instead of assessing local fairness based on the worst disparate impact value for the globally defined subgroups, we focus on the worst 3. This approach allows us to observe the impact of our instance-level re-weighting strategy at an intermediate level.

The definition of our segment remains consistent as in Section~\ref{sec:experiments-localfairness}. %
In Figure~\ref{fig:results-localfairness_worst3DI} we show the resulting Accuracy-Worst-3-DI Pareto curves for each method. We observe that the results are slightly similar to those observed on the Worst-1-DI in the main text.  For all datasets, as in the Worst-1-DI experiment ROAD mostly outperforms all other methods. On the other hand, BROAD sometimes reaches comparable performance, illustrating the overly pessimistic solutions mentioned earlier. 

\subsubsection{Additional results for the Ablation study on subgroup definition (Section~~\ref{sec:experiments-ablation-subgroups})
}

Section~\ref {sec:experiments-ablation-subgroups} described the experimental results 
on the adaptability of ROAD to various subgroup definitions. We complete these results with the following: in Figure~\ref{fig:app-ablation-subgroups-5}, the experiments are repeated five times, and the same conclusions are observed: It is consistently below the values reached by
the globally fair model~\cite{zhang2018} In addition; in Figure~\ref{fig:app-ablation-subgroups-curves}, the details of the same results are presented. Instead of only showing the Worst-1-DI values, all local DI values, for every subgroup, are shown.

\begin{figure}
    \centering
    \includegraphics[width=0.32\linewidth]{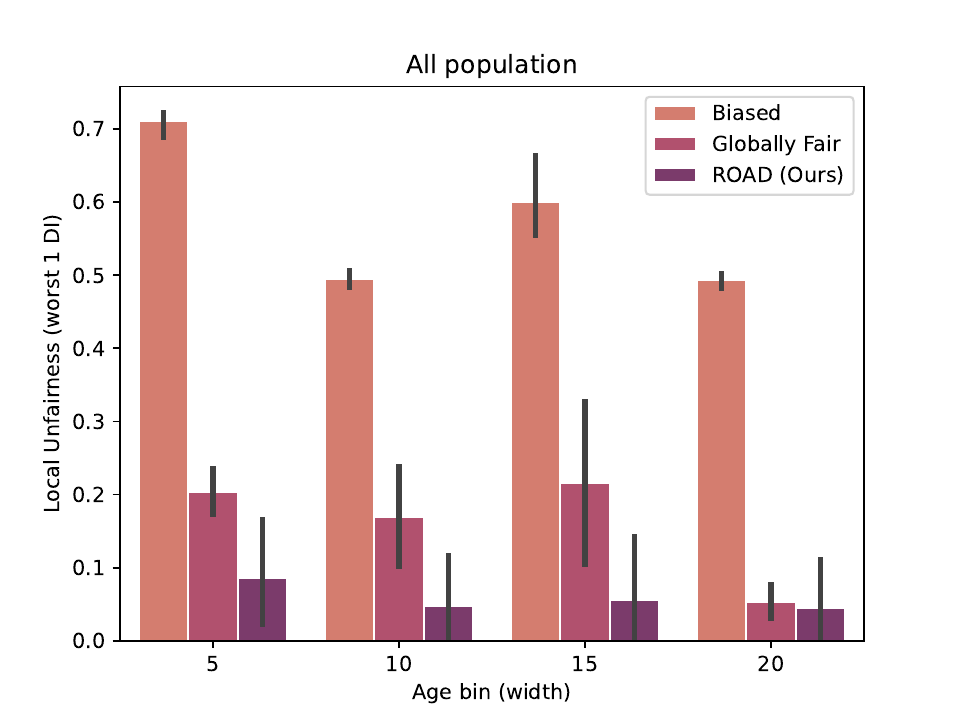}
    \includegraphics[width=0.32\linewidth]{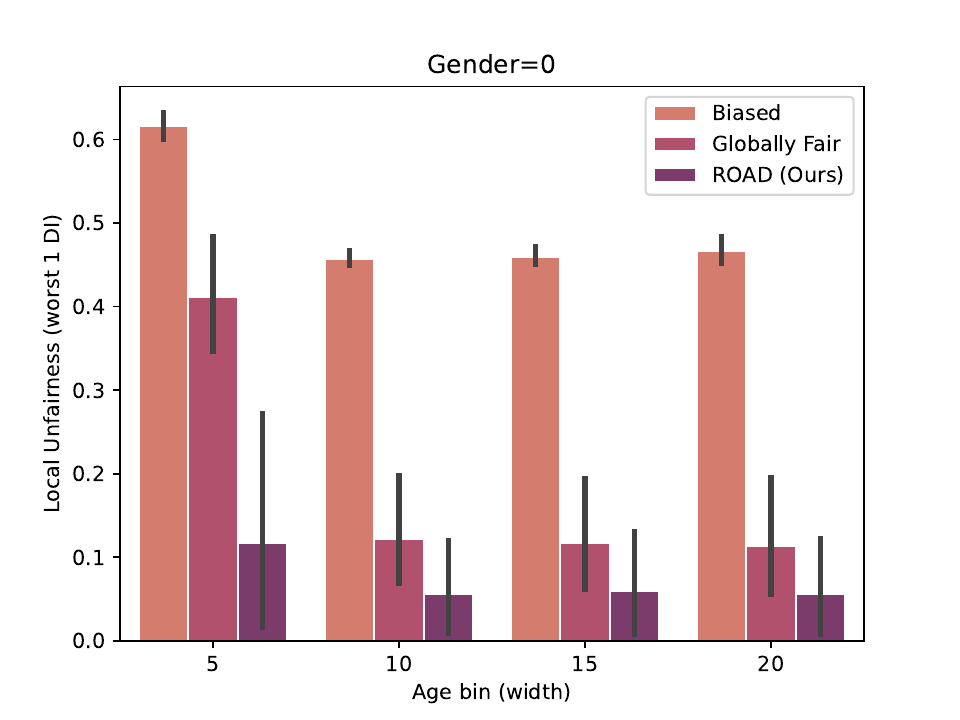}
    \includegraphics[width=0.32\linewidth]{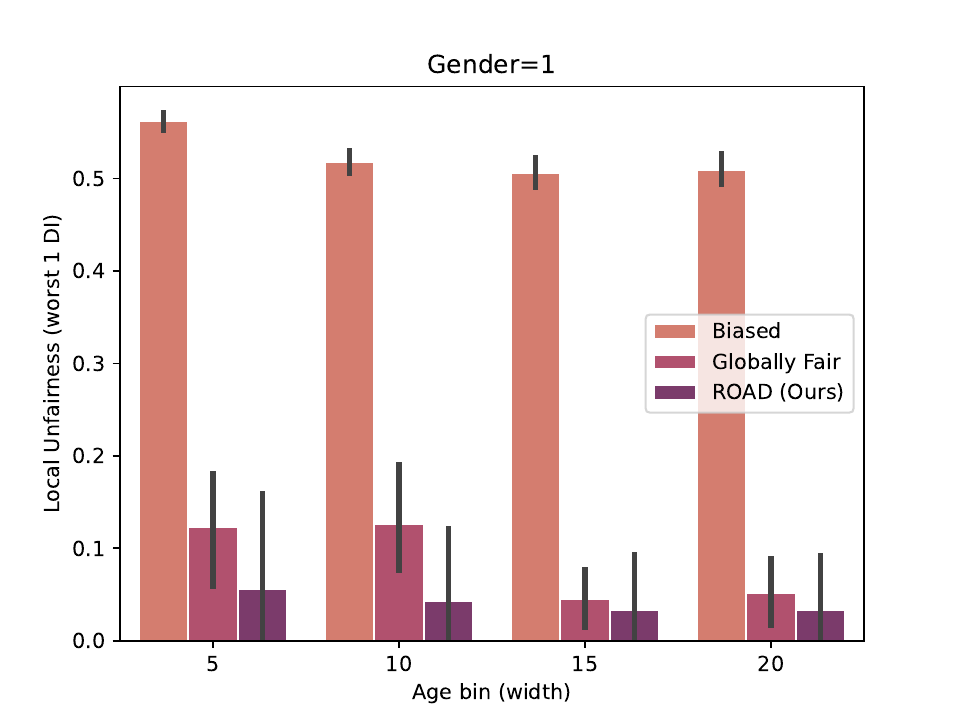}
    \label{fig:app-ablation-subgroups-5}
    \caption{Local DI scores for subgroups of the Law dataset defined as age bins (top row)
and age bins and gender attributes (middle and bottom rows). Each column corresponds to bins of
different sizes. Subgroups with less than 10 individuals are ignored. The process has been iterated five times. 
    }
    \label{fig:localfairness_subpop}
\end{figure}


\begin{figure}
    \centering
    \includegraphics[width=0.23\linewidth]{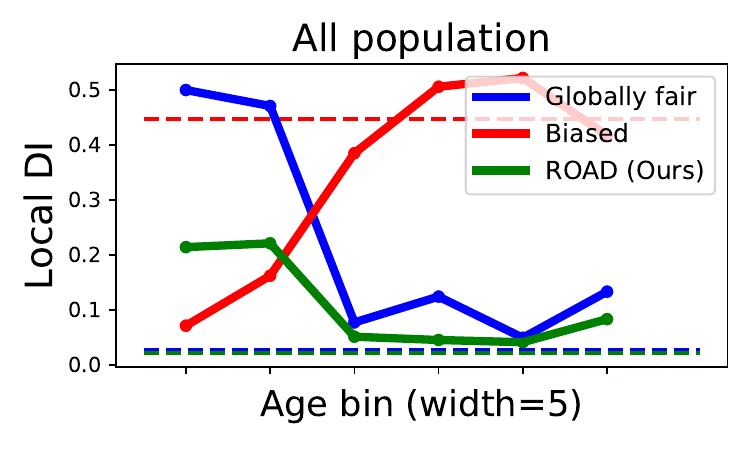}
    \includegraphics[width=0.23\linewidth]{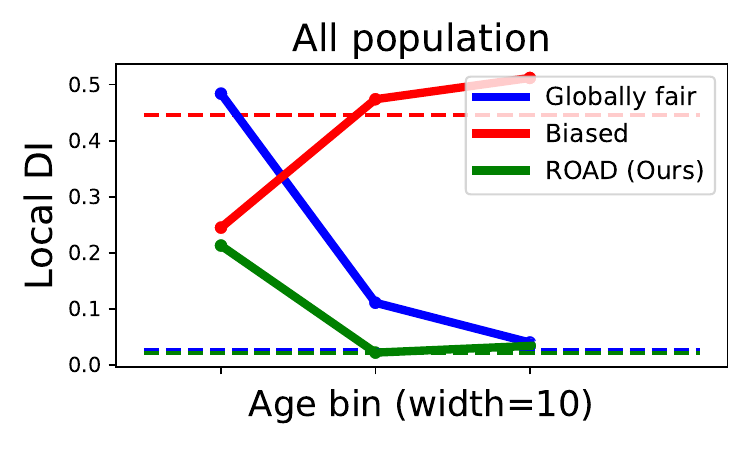}
    \includegraphics[width=0.23\linewidth]{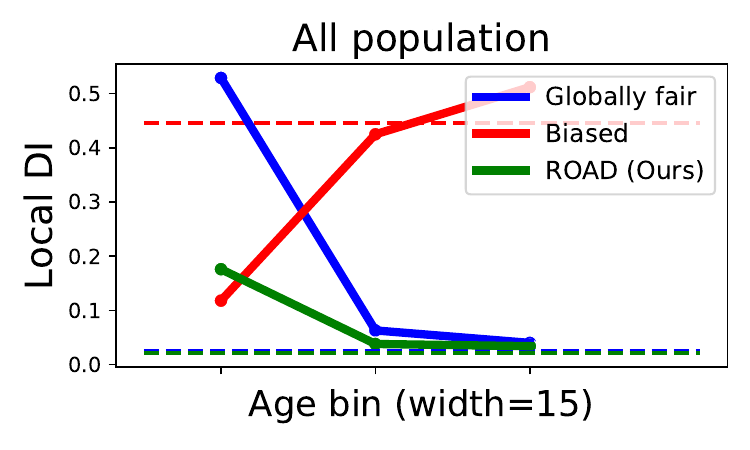}
    \includegraphics[width=0.23\linewidth]{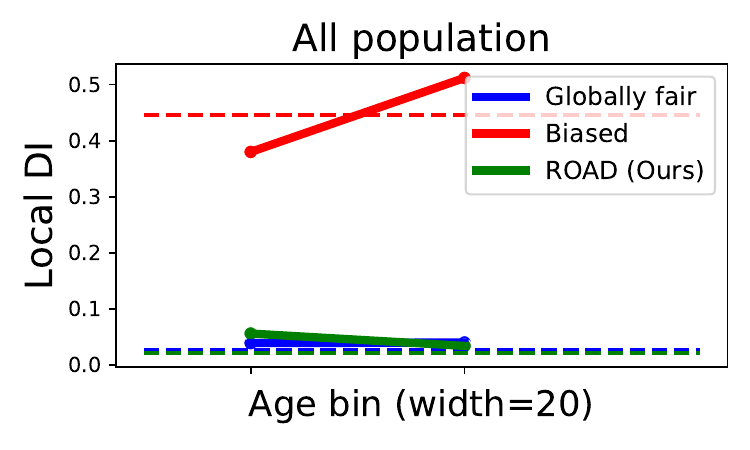}
    \includegraphics[width=0.23\linewidth]{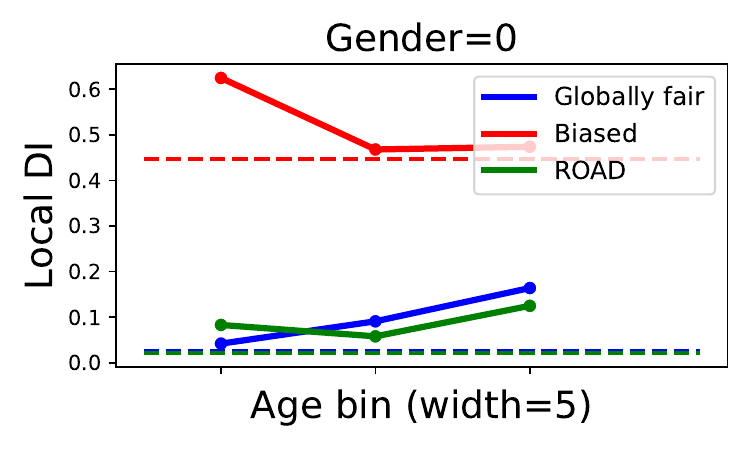}
    \includegraphics[width=0.23\linewidth]{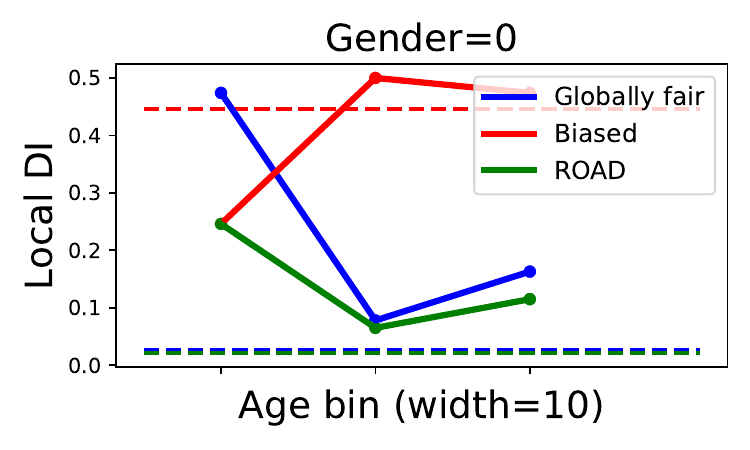}
    \includegraphics[width=0.23\linewidth]{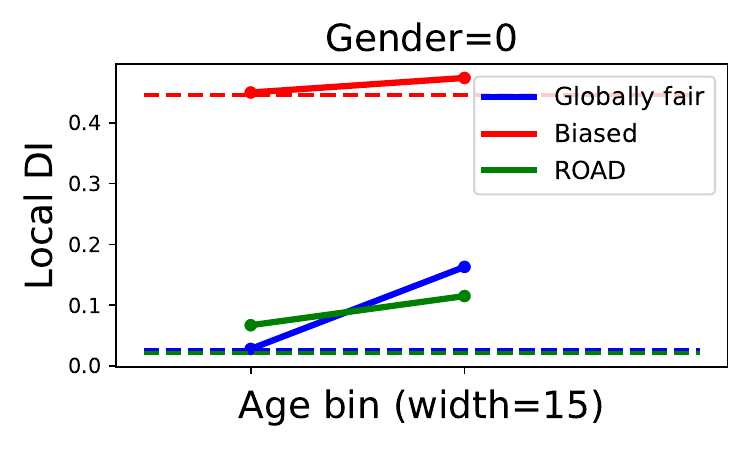}
    \includegraphics[width=0.23\linewidth]{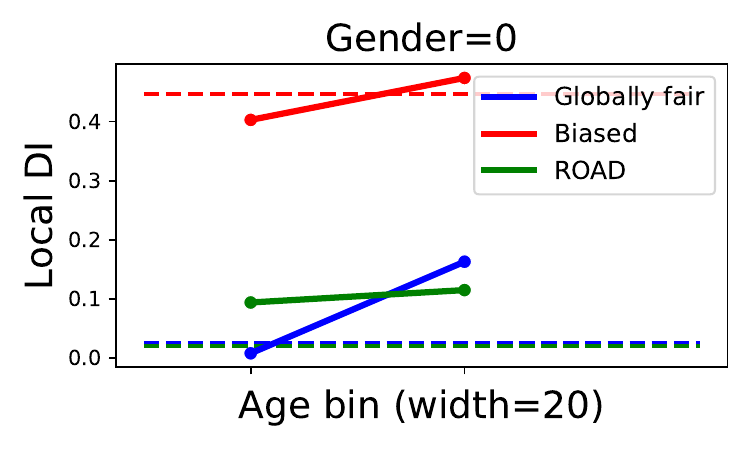}
    \includegraphics[width=0.23\linewidth]{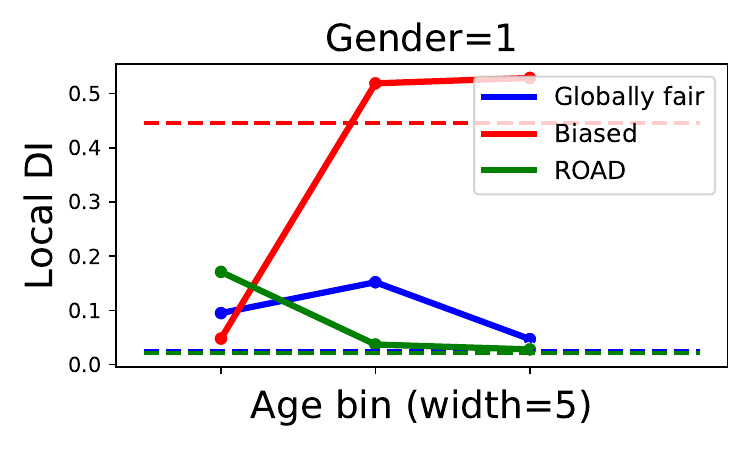}
    \includegraphics[width=0.23\linewidth]{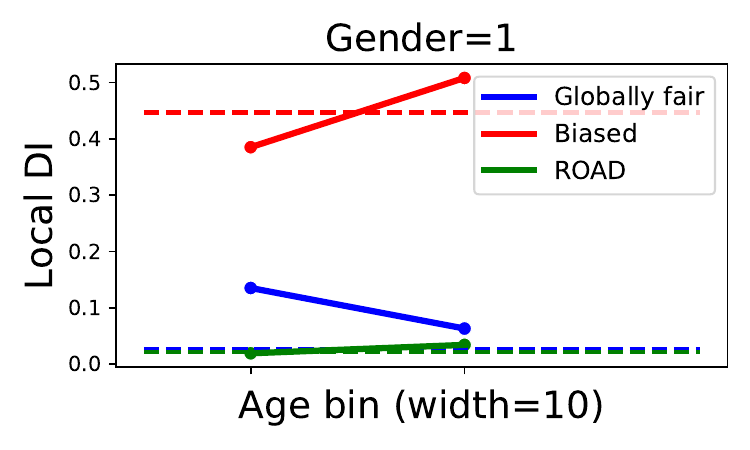}
    \includegraphics[width=0.23\linewidth]{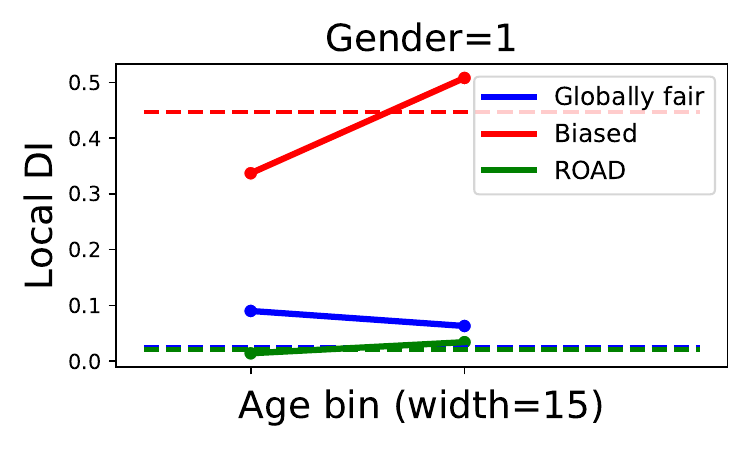}
    \includegraphics[width=0.23\linewidth]{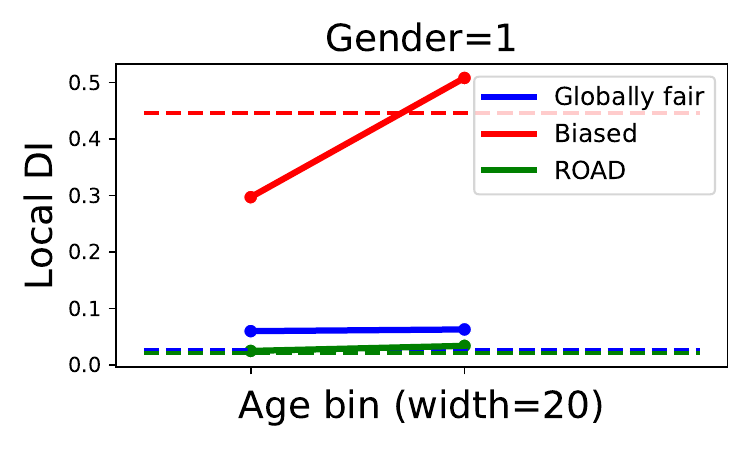}
    \label{fig:app-ablation-subgroups-curves}
    \caption{Local DI scores for subgroups of the Law dataset defined as age bins (top row) and age bins and gender attribute (middle and bottom rows). Each column corresponds to bins of different sizes. Subgroups with less than 10 individuals are ignored. 
    }
    \label{fig:app_ablation-subgroups}
\end{figure}

\subsection{Additional Experimental Details}
We provide additional details on the experimental evaluations.
\label{App:Expe_details}

\subsubsection{Datasets Description}
\label{sec:app-datasets}

\paragraph{Experiment 1: Assessing Local Fairness}
\begin{itemize}
\item \textbf{Compas} The COMPAS data set~\cite {angwin2016machine} contains 13 attributes of about 7,000 convicted criminals with class labels that state whether or not the individual recidivated within $2$ years. Here, we use race as sensitive attribute, encoded as a binary attribute, Caucasian or not-Caucasian.
\item  \textbf{Law} The Law School Admission dataset \cite{wightman1998lsac} contains $10$ features on 1,823 law school applicants, including LSAT scores, undergraduate GPA, and class labels representing their success in passing the bar exam. The sensitive attribute selected in this dataset is race, a binary attribute.

\item \textbf{German} The German UCI credit dataset \cite{german_credit_data} comprises credit-related data for 1,000 individuals, including 20 attributes such as credit history, purpose, and employment status, as well as class labels that represent their credit risk (good or bad). For the sensitive attribute, we consider the gender attribute with Males as the privileged group, and Females as the unprivileged group. 

\end{itemize}

\paragraph{Experiment 2: Distribution Drift}

The training set used for this experiment is the training set of the traditional Adult dataset.

The test sets are the following:
\begin{itemize}
    \item in-distribution: traditional test set of the Adult dataset
    \item 2014: Adult 2014 dataset from Folktables, see below
    \item 2015: Adult 2015 dataset from Folktables, see below
\end{itemize}

Adult dataset contains US Census
data collected before 1996. 
Its very frequent use in the fairness literature was discussed in~\citet{ding2021retiring}, in which the authors criticize its out-of-date aspect, especially in the context of evaluating social aspects of algorithms such as their fairness. Instead, they introduce the Folktables, up-to-date, aligned, versions of the Adult dataset. Using these datasets, the authors discussed the generalization of the fairness metrics over time. A similar setup was then considered by~\citet{wang2023robust} to evaluate how robust their method, CUMA, was to temporal drift. In our work, we directly replicate their experimental setup.

\subsubsection{Methods Description}
\label{sec:app-competitors}
Besides ROAD and BROAD, proposed in this work, the following method are used as competitors and baselines for the experiments:

\begin{itemize}
    \item Globally fair model~\citep{zhang2018}: described in Section~\ref{sec:background-groupfairness}. To obtain the Pareto curves in the results, the~$\lambda$ hyperparameter is set to various values, allowing the exploration of the Fairness-accuracy tradeoff. As described in the footnote in Section~\ref{sec:proposition-road}, this approach can be adapted to optimize either Demographic Parity or Equalized Odds.
    \item FAD~\citep{adel2019one}: similarily to~\cite{zhang2018}, FAD relies on an adversarial network to reconstruct the sensitive attribute and thus estimate bias. However, instead of reconstructing the sensitive attribute from the predictions~$f(x)$, FAD reconstructs it from an intermediary latent space. The authors have shown that this helped obtaining better results. Similarly to~\cite{zhang2018}, FAD relies on a single hyperparameter~$\lambda$ to balance between accuracy and fairness. Originally optimizing Demographic Parity, the same adaption as the approach from~\citet{zhang2018} can be made to optimize EO.
    \item Robust FairCORELS~\citep{ferry2022improving}: this method leverages the DRO framework for fairness and integrates in the FairCORELS approach~\citep{aivodji2021faircorels}. It relies on preprocessing step discretizing the whole dataset into segments. The uncertainty set~$\mathcal{Q}$ for DRO is then defined as a Jaccard Index-ball around $p$. RobustFairCORELS then uses linear programming and efficient pruning strategies to explore~$\mathcal{Q}$ and solve its otpimization problem. 
    RobustFairCORELS relies mainly on one hyperparameter:
    $\epsilon$, the unfairness tolerance allowed, from which is derived the \emph{size} of $\mathcal{Q}$ (maximum Jaccard distance from $p$ allowed).
    Besides, it relies on FairCORELS hyperparameters, which we set to their default values. RobustFairCORELS can be used to optimize several fairness criteria, including Demographic Parity and Equalized Odds.
    \item fairLR~\citep{rezaei2020fairness}: to achieve more robust fairness, fairLR also leverages the DRO framework for fairness. 
    The generalization they propose to ensure is more restrictive, as they make the assumption that there is no drift between the train and test distributions. They therefore define the ambiguity set~$\mathcal{Q}$ as the set of distributions matching some statistics of the training data, that they use to train logistic regressions. These constraints can be adapted either to optimize for Demographic Parity or Equalized Odds.
    fairLR relies only one hyperparameter, $C$, a regularization parameter for the logistic regression. Although it has seemingly no direct link with fairness, we still consider it in the experiments. 
    \item CUMA~\citep{wang2023robust}: this method leverages adversarial learning to mitigate fairness. Besides, it proposes to add a new component to its loss function, based on the curvature smoothness of its loss function.
    The idea is that to ensure better fairness generalization, the curvatures of the loss functions of each sensitive group should be similar. The objective function is then approximated using a neural network. It uses two hyperparameters, to balance accuracy with fairness and smoothness: $\alpha$ (adversarial component) and $\gamma$ (smoothness component). Similarly to the Globally Fair model and FAD, CUMA can be adapted in the same manner to either optimize Demographic Parity of Equalized Odds.  
\end{itemize}

\subsubsection{Subgroups Description}
\label{sec:app-subgroups-description}

In this section, we describe how subgroups were defined in the experiments. 

\paragraph{Experiment 1}

\begin{itemize}
    \item \textbf{Compas} Subpopulations are created by discretizing the feature \emph{Age} in buckets with a width of $10$, intersected with feature \emph{Gender}. Subgroups of size of at least $50$ are kept.  
    \item \textbf{Law 
    } Subpopulations are created in the same manner, by discretizing the feature \emph{Age} in buckets with a width of $10$, intersected with feature \emph{Gender}. Subgroups of size of at least $20$ are kept. 
    \item \textbf{German 
    }
    Subpopulations are created by discretizing the feature \emph{Age} in buckets with a width of $10$, intersected with feature \emph{Gender}. Subgroups of size of at least $20$ are kept. 
\end{itemize}

\paragraph{Ablation study: how important is the definition of subgroups}
\label{App:defsubgroups}

Table~\ref{tab:app-subgroups} describes how subgroups were defined in the 12 scenarios shown in Figure~\ref{fig:app_ablation-subgroups}.

\begin{table}
    \centering
    \begin{tabular}{c|c|c}
        Subgroup definition & Age bin width & Gender \\
        \hline
        Subgroup def. 0 & 5 & All population  \\
        Subgroup def. 1 & 10 & All population  \\
        Subgroup def. 2 & 15 & All population  \\
        Subgroup def. 3 & 20 & All population  \\
        Subgroup def. 4 & 5 & Gender=0  \\
        Subgroup def. 5 & 10 & Gender=0  \\
        Subgroup def. 6 & 15 & Gender=0  \\
        Subgroup def. 7 & 20 & Gender=0  \\
        Subgroup def. 8 & 5 & Gender=1  \\
        Subgroup def. 9 & 10 & Gender=1  \\
        Subgroup def. 10 & 15 & Gender=1  \\
        Subgroup def. 11 & 20 & Gender=1  \\

    \end{tabular}
    \caption{Description of how subgroups were defined in the 12 scenarios shown in Figure~\ref{fig:ablation-subgroups}.}
    \label{tab:app-subgroups}
\end{table}

\subsubsection{Implementation details for ROAD and BROAD}
\label{sec:app-hyperparameters-architecture}

In order to obtain the results shown in Figure~\ref{fig:results-localfairness} ad~\ref{fig:adult-drift}, we explore the following hyperparameter values for ROAD and BROAD:

\begin{itemize}
    \item $\lambda_g$: grid of 20 values between 0 and 5
    \item $\tau$: grid of 10 values between 0 and 1
\end{itemize}

The networks $f_{w_f}$, $g_{w_g}$ and $r_{w_r}$ have the following architecture:

\begin{itemize}
    \item $f_{w_f}$: FC:64 R, FC:32 R, FC:1 Sig
    \item $g_{w_g}$: FC:64 R, FC:32 R, FC:16 R, FC:1 Sigm
    \item $h_{w_r}$: FC:64 R, FC:32 R, FC:1
\end{itemize}

\subsection{Description of the algorithms}
\label{sec:desc_algo}


\subsubsection{ROAD Algorithm}

The ROAD algorithm, for the Demographic Parity objective, aims to train a locally fair model by iteratively updating three components: the sensitive adversarial model $g_{w_g}$, the adversarial ratio model $r_{w_r}$, and the predictor model $f_{w_f}$.  The following problem of ROAD is defined as follows (cf. Eq.~\ref{eq:fair-dro0}):
\begin{eqnarray}\label{eq:ROAD_NN}
\min_{w_f} \max_{\substack{_{w_r}}}
\frac{1}{n} \sum_{i=1}^{n} \mathcal{L}_{Y}(f_{w_f}(x_i), y_i) - \lambda_g [ \frac{1}{n} \sum_{i=1}^{n}{({\tilde{r}_{w_r}}(x_i,s_i)\mathcal{L}_{S}(g_{w_g^*}(f_{w_f}(x_i)), s_i)))}  \nonumber \\ + \tau \underbrace{\frac{1}{n} \sum_{i=1}^{n}{({\tilde{r}_{w_r}}(x_i,s_i)\log({\tilde{r}_{w_r}}(x_i,s_i))}}_{\text{KL constraint}}]  \\
\text{with } w_g^* = \argmin_{w_g} \frac{1}{n} \sum_{i=1}^{n} \mathcal{L}_{S}(g_{w_g}(f_{w_f}(x_i)), s_i) \nonumber 
\end{eqnarray}
with $\tilde{r}$ representing the normalized ratio per demographic sub-groups.

To obtain better results, as commonly done in the literature, 
we perform several training iterations of the adversarial networks 
for each prediction iteration. This approach results in more robust adversarial algorithms, preventing the predictor classifier from dominating the adversaries.

The required inputs for ROAD include the training data containing $X$, $Y$, and $S$ (please note that $S$ is not required at the testing time), the loss functions (i.e., logloss function applied in our experiment), batch size, number of epochs, neural networks architectures, learning rates, and the control parameters ($\tau$ and $\lambda$).

For each epoch, the algorithm~\ref{alg:ROAD} iterates through all the batches in the training data. In each batch, the algorithm performs the following steps:

\begin{itemize}
    \item Update the sensitive adversarial model $g_{w_g}$: The sensitive adversarial takes the current output predictions $f_{w_f}(x)$ as input. The sensitive adversarial loss is calculated based on the current parameters of the predictor and sensitive adversarial models. The sensitive adversarial model is then updated using traditional gradient descent.
    \item Update the ratio model $r_{w_r}$: The likelihood ratio is normalized across demographic subgroups. The likelihood ratio cost function is calculated with respect to the average of the product between the weighting ratio and sensitive adversarial loss in addition to the KL constraint. Then, the adversarial model is updated using gradient descent.
    \item Update the predictor model: The loss function for the predictor model $f_{w_f}$ is calculated, taking into account the traditional loss function of the predictor and the average of the product between the weighting ratio and sensitive adversarial. The predictor model is then updated using gradient descent.
\end{itemize}

The process is repeated for the specified number of epochs. By iteratively updating these three models, the algorithm aims to achieve a balance between minimizing the loss function and maintaining fairness across demographic subgroups in the resulting predictor model.

\begin{algorithm}
\caption{ 
ROAD: Robust Optimization for Adversarial Debiasing}
\label{alg:ROAD}
\begin{algorithmic}[1] 
\REQUIRE Training set $\Gamma$,\\
 Individual loss functions $l_Y$ and $l_S$, \\
 Batchsize $b$, Number of epochs $n_e$, Neural Networks $r_{\omega_{r}}$, $f_{\omega_f}$ and $g_{\omega_g}$, \\
 Learning rates $\alpha_f$, $\alpha_g$ and $\alpha_r$, Number of training iterations $n_g$ and $n_r$,\\
 Fairness control $\lambda_g \in \mathbb{R}$ and Temperature control $\tau \in \mathbb{R}$


\FOR{ epoch $ \in [1,... n_e]$}

\FORALL{batch $\{(x_{1}, s_1, y_{1}), . . . ,(x_{B}, s_B, y_{B})\}$ drawn from $\Gamma$}

\FOR{$i \in [1,... n_g]$}
\STATE $J_s(\omega_{f},\omega_{g}) \gets \frac{1}{b}\sum_{i=1}^{b}
l_S(g_{w_g}(f_{w_f}(x_i)),s_i)$ \COMMENT{Calculate the sensitive adversarial loss}
\STATE $\omega_{g} \gets \omega_{g} -\alpha_{g} \frac{\partial  J_s(\omega_{f},\omega_{g})}{\partial {\omega_{g}}}$ \COMMENT{Update the sensitive adversarial model $g_{\omega_{g}}$ by gradient descent}
\ENDFOR
\FOR{$j \in [1, ... n_r]$}
\STATE $\forall s \in \{0,1\} \quad r_{w_r}(x_i,s_i =s) \gets \frac{\sum_{i=1}^{b} e^{h_{w_r}(x_i,s_i)}}{\sum_{i=1}^{b} e^{h_{w_r}(x_i,s_i)}\mathds{1}_{s_i =s}}$ \COMMENT{Normalize the likelihood ratio per demographic subgroups}
\STATE $J_r(\omega_{r},\omega_{f},\omega_{g}) \gets \frac{1}{b}\sum_{i=1}^{b}r_{w_r}(x_i,s_i)*l_S(g_{w_g}(f_{w_f}(x_i)),s_i)+\tau * r_{w_r}(x_i,s_i) \log(r_{w_r}(x_i,s_i))$ \COMMENT{Calculate the likelihood ratio cost function}
\STATE $\omega_{r} \gets \omega_{r} -\alpha_{r} \frac{\partial J_r(\omega_{r},\omega_{f},\omega_{g})}{\partial {\omega_{r}}}$ \COMMENT{Update the adversarial model $r_{\omega_{r}}$ by gradient descent}

\ENDFOR
\STATE $\forall s \in \{0,1\} \quad r_{w_r}(x_i,s_i =s) \gets \frac{\sum_{i=1}^{b} e^{h_{w_r}(x_i,s_i)}}{\sum_{i=1}^{b} e^{h_{w_r}(x_i,s_i)}\mathds{1}_{s_i =s}}$ \COMMENT{Normalize the likelihood ratio per demographic subgroups}
\STATE $J_f(\omega_{r},\omega_{f},\omega_{g}) \gets \frac{1}{b}\sum_{i=1}^{b} l_Y(f_{w_f}(x_i),y_i) - \lambda r_{w_r}(x_i,s_i)*l_S(g_{w_g}(f_{w_f}(x_i)),s_i)$ \COMMENT{Calculate the loss function of the predictor model}

\STATE $\omega_{f} \gets \omega_{f} - \alpha_{f}\frac{\partial J_f(\omega_{r}, \omega_{f},\omega_{g})}{\partial \omega_{f}}$ \COMMENT{Update the predictor model}

\ENDFOR
\ENDFOR
\end{algorithmic}
\end{algorithm}

\newpage
\subsubsection{BROAD Algorithm}

The BROAD algorithm describes the non-parametric, implementation for the analytical solution of the problem considered in Eq~\ref{eq:final-pb}. 
The main difference between the two algorithms is that BROAD does not include the update step for the ratio model $r_{w_r}$ and instead calculates the likelihood ratio directly in the predictor model update step. While it simplifies the optimization problem, it has been observed that the uncertainty sets generated by non-parametric approaches are pessimistic (i.e., may encompass the considered distributions), and as a result, lead to sub-optimal solutions.

This non-parametric algorithm, for the Demographic Parity objective, aims to train a locally fair model by iteratively updating two components: the sensitive adversarial model $g_{w_g}$ and the predictor model $f_{w_f}$. The required inputs for this algorithm are the same as those for the previous one.

\begin{itemize}
    \item Update the sensitive adversarial model $g_{w_g}$: This step is the same as in ROAD. The sensitive adversarial loss is calculated based on the current parameters of the predictor and sensitive adversarial models. The sensitive adversarial model is then updated using traditional gradient descent.
    \item Calculate the likelihood ratio: Instead of updating the ratio model $r_{w_r}$ as the ROAD algorithm, the likelihood ratio is calculated directly using the exponential of the negative sensitive adversarial loss divided by the temperature parameter $\tau$. The likelihood ratio is normalized across sensitive groups.
    \item Update the predictor model: The loss function for the predictor model $f_{w_f}$ is calculated, taking into account the traditional loss function of the predictor and the average of the product between the weighting ratio and sensitive adversarial. The predictor model is then updated using gradient descent.
\end{itemize}

The process is repeated for the specified number of epochs. By iteratively updating these two models, the algorithm aims to achieve a balance between minimizing the loss function and maintaining fairness across demographic subgroups in the resulting predictor model.

In summary, the main difference between the two algorithms is the approach to updating the ratio model. ROAD updates the ratio model $r_{w_r}$ through an iterative process, while BROAD calculates the likelihood ratio directly in the predictor model update step. This alternative algorithm simplifies the process and may result in faster computation times. However, when comparing the performance with respect to fairness trade-offs, it has been observed to underperform compared to the original algorithm (ROAD).

\begin{algorithm}
\caption{ 
BROAD: Boltzmann Robust Optimization for Adversarial Debiasing \\ 
\quad \quad  (Conditional normalization)}
\label{alg:BROAD}
\begin{algorithmic}[1] 
\REQUIRE Training set $\Gamma$,\\
 Individual loss functions $l_Y$ and $l_S$, \\
 Batchsize $b$, Number of epochs $n_e$, Neural Networks $f_{\omega_f}$ and $g_{\omega_g}$, \\Learning rates $\alpha_f$, $\alpha_g$ and $\alpha_r$, Number of training iterations $n_g$, \\ 
 Fairness control $\lambda_g \in \mathbb{R}$ and Temperature control $\tau \in \mathbb{R}$


\FOR{ epoch $ \in [1,... n_e]$}

\FORALL{batch $\{(x_{1}, s_1, y_{1}), . . . ,(x_{B}, s_B, y_{B})\}$ drawn from $\Gamma$}

\FOR{$i \in [1,... n_g]$}
\STATE $J_s(\omega_{f},\omega_{g}) \gets \frac{1}{b}\sum_{i=1}^{b}
l_S(g_{w_g}(f_{w_f}(x_i)),s_i)$ \COMMENT{Calculate the sensitive adversarial loss}
\STATE $\omega_{g} \gets \omega_{g} -\alpha_{g} \frac{\partial  J_s(\omega_{f},\omega_{g})}{\partial {\omega_{g}}}$ \COMMENT{Update the sensitive adversarial model $g_{\omega_{g}}$ by gradient descent}
\ENDFOR
\STATE  $\forall s \in \{0,1\} \quad 
{r}(x_i,s_i) = \frac{\e^{-\mathcal{L}_{S}(g_{w_g}(f_{w_f}(x_i)), s_i)/\tau}}{\frac{1}{n_{s=s_i}}\sum_{(x_j,s_j)\in \Gamma, s_j=s_i}^{n}\e^{-\mathcal{L}_{S}(g_{w_g}(f_{w_f}(x_j)), s_j)/\tau}}$
\COMMENT{Calculate the likelihood ratio per demographic subgroups}
\STATE $J_f(\omega_{f},\omega_{g}) \gets \frac{1}{b}\sum_{i=1}^{b} l_Y(f_{w_f}(x_i),y_i) - \lambda r(x_i,s_i)*l_S(g_{w_g}(f_{w_f}(x_i)),s_i)$ \COMMENT{Calculate the loss function of the predictor model}

\STATE $\omega_{f} \gets \omega_{f} - \alpha_{f}\frac{\partial J_f(\omega_{f},\omega_{g})}{\partial \omega_{f}}$ \COMMENT{Update the predictor model}

\ENDFOR
\ENDFOR
\end{algorithmic}
\end{algorithm}

\end{document}